\pdfoutput=1
\documentclass{article}

\PassOptionsToPackage{round}{natbib}

\usepackage[main, final]{neurips_2026}
\usepackage{enumitem}
\usepackage[utf8]{inputenc}
\usepackage[T1]{fontenc}
\usepackage{url}
\usepackage{booktabs}
\usepackage{amsfonts}
\usepackage{amsmath}
\usepackage{amssymb}
\usepackage{nicefrac}
\usepackage{microtype}
\usepackage{xcolor}
\usepackage{colortbl}
\usepackage{multirow}
\definecolor{anchorrow}{RGB}{233,242,253}
\definecolor{citeorange}{RGB}{200,90,30}
\usepackage{graphicx}
\graphicspath{{figures/}}
\usepackage{algorithm}
\usepackage{algorithmic}
\usepackage{float}
\usepackage{mathtools}
\usepackage{amsthm}
\usepackage[colorlinks=true,linkcolor=black,citecolor=citeorange,urlcolor=blue]{hyperref}
\usepackage{cleveref}
\usepackage{xcolor}
\definecolor{ssmblue}{HTML}{1F77B4}

\DeclareMathOperator*{\argmin}{arg\,min}
\DeclareMathOperator*{\argmax}{arg\,max}

\newcommand{\E}{\mathbb{E}}
\newcommand{\R}{\mathbb{R}}
\newcommand{\Qr}{Q_r}
\newcommand{\Qh}{Q_h}
\newcommand{\Vh}{V_h}

\newcommand{\calS}{\mathcal{S}}
\newcommand{\calA}{\mathcal{A}}
\newcommand{\calB}{\mathcal{B}}

\newcommand{\Isafe}{\mathbf{1}_{\mathrm{safe}}}
\newcommand{\Iind}{\mathbf{1}_{\mathrm{ind}}}

\newcommand{\gqr}{\nabla_{\!a} \Qr}
\newcommand{\gqh}{\nabla_{\!a} \Qh}

\newcommand{\Hent}{\mathcal{H}}
\newcommand{\indicator}[1]{\mathbf{1}{\left\{#1\right\}}}
\newcommand{\calL}{\mathcal{L}}
\newcommand{\calC}{\mathcal{C}}
\newtheorem{theorem}{Theorem}
\newtheorem{proposition}[theorem]{Proposition}

\newtheorem{definition}{Definition}

\newtheorem{problem}{Problem}

\definecolor{bestfont}{RGB}{0,70,168}
\definecolor{secondfont}{RGB}{56,166,0}
\definecolor{infeasiblebg}{RGB}{250,224,224}
\newcommand{\bestR}[1]{\textcolor{bestfont}{$\boldsymbol{#1}$}}
\newcommand{\secondR}[1]{\textcolor{secondfont}{$\boldsymbol{#1}$}}
\newcommand{\ob}[1]{\cellcolor{infeasiblebg}$#1$}

\title{Safe Score Matching: Diffusion Policies with \\Hamilton-Jacobi Reachability for Online Safe Reinforcement Learning}

\author{%
  Boyang Li\thanks{Equal contribution.}\\
  University of California San Diego\\
  \texttt{bol025@ucsd.edu}
  \And
  Matthew Kim\footnotemark[1]\\
  University of California San Diego\\
  \texttt{mak009@ucsd.edu}
  \And
  Sylvia Lee Herbert\\
  University of California San Diego\\
  \texttt{sherbert@ucsd.edu}
}

\begin{document}
\maketitle

\begin{abstract}
Online safe reinforcement learning (RL) seeks policies that maximize reward while satisfying safety constraints. A popular line of research in safe RL relaxes safety to a soft expected-cost constraint and solves the resulting Constrained Markov Decision Process (CMDP) via primal-dual Lagrangian updates that only enforce safety on average. To address this limitation, hard, state-wise constraints are introduced and often imposed through Hamilton-Jacobi (HJ) reachability. Yet such constraints require solving different objectives in the feasible and infeasible regions of the state space: reward maximization in the former, recovery toward the feasible regions in the latter. The resulting target action distributions are inherently multimodal, and this structure poses a fundamental challenge for the Gaussian or deterministic actors used in existing HJ-based safe RL, which often collapse onto suboptimal modes. Diffusion policies provide the expressiveness needed to represent such distributions, and recent work on Q-score matching offers a route to training them for online RL by score regression---but has been applied only to reward maximization. Building on this framework, we propose Safe Score Matching (SSM), an off-policy actor-critic method that adapts Q-score matching to hard-constrained safe RL by gating a two-branch score target with HJ reachability: inside the feasible set, the denoising process degenerates to Q-score matching on actions classified as viable by the HJ critic, encouraging reward maximization; outside, a recovery branch biases denoising toward regions with lower worst-case violation, guided by the negative action gradient of the HJ safety value. On quadrotor and fixed-wing trajectory-tracking and stabilize-and-avoid benchmarks, SSM attains the best or near-best task performance with low false-safe rates, whereas the primal-dual baseline admits more unsafe behavior and reachability-based baselines tend to be more conservative; on Safety-Gymnasium velocity tasks, SSM attains the lowest cost with competitive reward. Code is available at \url{https://github.com/byli888/safe-score-matching}.
\end{abstract}

\section{Introduction}
\label{sec:intro}

Reinforcement learning policies that operate in physical systems must
respect safety requirements whose violation has no statistical
remedy~\citep{garcia2015comprehensive, gu2024review}. The standard
formulation is the constrained Markov decision process (CMDP), in
which the agent maximizes a reward objective subject to a constraint
on safety violations~\citep{altman2021constrained, achiam2017constrained}.
Two families of solvers dominate the online setting; each leaves a
gap that motivates the present work.

Primal--dual methods relax the constraint to an expected
cumulative-cost budget and solve the resulting Lagrangian saddle
point through alternating
updates~\citep{tessler2018reward, ray2019benchmarking, stooke2020responsive, wu2024offpolicy}.
The relaxation is convenient but loses the state-wise guarantee:
even at the optimum, individual trajectories can violate the
constraint~\citep{yu2022reachability, zheng2024safe}. A second family
takes the opposite stance. Hamilton--Jacobi (HJ) reachability
characterizes the largest control-invariant set
$\calS_f^\star = \{s : \Vh^\star(s) \le 0\}$ on which the hard
constraint $h \le 0$ is enforceable, and it carries a built-in
recovery target on the
complement~\citep{fisac2019bridging, hsu2021safety, yu2022reachability, ma2021feasible, ganai2023iterative}.
HJ delivers a hard, state-wise formulation, but the geometry of its
feasible action set
$\calA_f^\star(s) = \{a : \Qh^\star(s, a) \le 0\}$ is non-convex and
in general disconnected, and the policies extracted in prior work
are Gaussian or deterministic. A unimodal actor cannot place mass on
several disconnected viable components without leaking into the
unsafe region between them, and even on a single connected component
it is prone to collapsing onto an early-discovered local optimum of
the reward
$Q$-function~\citep{li2024learning, dong2025maximum, ding2024diffusion, algd2026}.

A separate line of work has shown that diffusion policies are
expressive enough to represent multimodal action distributions in
online RL. Q-score matching (QSM)~\citep{psenka2023learning} aligns
a DDPM denoiser to the action gradient of the reward $Q$-function;
subsequent score- and flow-matching methods follow the same
template~\citep{ma2025soft, lv2025flow, zhang2026sac, mcallister2026flow}.
These methods are reward-driven and provide no mechanism for hard,
state-wise safety.

\begin{table}[h]
\centering
\small
\caption{Positioning of SSM by design properties. SSM combines online actor--critic training, state-wise HJ safety, a diffusion-policy class, and an HJ-gated score-matching actor update.}
\label{tab:positioning}
\setlength{\tabcolsep}{4pt}
\begin{tabular}{@{}lcccc@{}}
\toprule
Method family & Online & Safety semantics & Policy class & Actor update \\
\midrule
Primal--dual safe RL & $\checkmark$ & Expected cost & Gaussian / deterministic & Primal--dual PG \\
HJ / reachability safe RL & $\checkmark$ & State-wise / HJ & Gaussian / deterministic & Actor--critic / shielded \\
Reward-only diffusion RL & $\checkmark$ & Reward only & Diffusion / flow & Score / flow matching \\
Offline safe diffusion & --- & State-wise / HJ & Diffusion & Offline guided regression \\
\textbf{SSM (Ours)} & $\checkmark$ & State-wise / HJ & Diffusion & HJ-gated score matching \\
\bottomrule
\end{tabular}
\end{table}

We close the gap between these two lines. Starting from the
reachability-constrained formulation of safe
RL~\citep{yu2022reachability, zheng2024safe}, we route an
action-level target distribution by HJ feasibility: on feasible
states with viable actions the target follows the reward gradient,
and on infeasible states it follows the recovery direction that
minimizes the worst-case safety violation. The routing depends only
on the learned safety critic, and the resulting score field is a
function of the learned reward and safety critics alone. The
diffusion policy is trained to match this score field. We call the
resulting algorithm \textbf{Safe Score Matching}
(SSM);
Table~\ref{tab:positioning} positions it relative to existing safe RL
families. The constraint is hard at the level of the formulation. Because SSM learns its critics from data, we evaluate the safety of
the trained policy empirically, and Proposition~\ref{prop:gate-margin}
states how a bounded critic error affects the routing. Our
contributions are threefold:
\begin{itemize}[leftmargin=*, itemsep=3pt, topsep=3pt]
    \item \textbf{Unified framework for safe online diffusion
    policy learning.} We introduce Safe Score Matching (SSM), which, to the best of our knowledge, is the first online actor-critic method that combines HJ reachability-based state-wise feasibility with diffusion-policy score regression for hard-constrained safe RL.

    \item \textbf{HJ-gated score regression target.} We derive a
    closed-form, action-level score regression target whose support
    and energy are routed by an HJ feasibility gate. On feasible
    states with viable actions, the target tracks the action
    gradient of the reward critic; on infeasible states, the
    recovery direction. The gate is determined by the learned safety
    critic, and the entire target depends only on standard off-policy
    critics, separating the safety routing from the policy class.

    \item \textbf{Empirical validation.} On Quad2D trajectory
    tracking, Quad3D regulation, and F16 stabilize-and-avoid, SSM
    attains the best or near-best task metric on each benchmark with low false-safe rates. On two Safety-Gymnasium velocity tasks, it attains the lowest cost of all compared methods with competitive
    reward.
\end{itemize}

\section{Related Work}
\label{sec:related}

SSM sits at the intersection of three lines of work in safe RL: (i) online safe RL methods that enforce safety as an expected cumulative-cost budget through primal--dual or Lagrangian updates~\citep{achiam2017constrained, ray2019benchmarking, stooke2020responsive, wu2024offpolicy}, which include both on-policy CPO/PPO-Lag/PID-Lag and off-policy SAC-Lag/CVPO/CAL variants but enforce safety only on average; (ii) state-wise / reachability-based methods that constrain the policy to a control-invariant set, including Lyapunov-based methods~\citep{chow2018lyapunov}, control barrier functions~\citep{qin2022sablas, ma2021model}, and Hamilton--Jacobi reachability analysis~\citep{fisac2019bridging, hsu2021safety, yu2022reachability, ma2021feasible, ganai2023iterative, qin2024feasible, sharpless2026dualobjective}, which target hard, state-wise safety but rely on Gaussian or deterministic actors; and (iii) generative-policy methods, which include offline diffusion / flow policies~\citep{janner2022planning, wang2022diffusion, kang2023efficient, ren2024diffusion} and online score- and flow-matching methods~\citep{psenka2023learning, ma2025soft, lv2025flow, zhang2026sac, mcallister2026flow}, all reward-only. The closest prior work is FISOR~\citep{zheng2024safe}, which combines hard safety, HJ-style feasibility, and a diffusion policy in the offline setting. SSM studies the corresponding online actor--critic setting, where the diffusion policy must be updated from newly collected data and the HJ-derived signal must be converted into a denoising score target. An extended discussion on related works appears in Appendix~\ref{app:related}.

\section{Problem Formulation}
\label{sec:prelim}
\label{sec:cmdp}

Consider a Markov decision process $(\calS, \calA, F, r, h, c, \gamma_r, \gamma_h)$ with state space $\calS$, action space $\calA$, deterministic dynamics $F : \calS \times \calA \to \calS$ ($s_{t + 1} = F(s_t, a_t)$), reward $r : \calS \times \calA \to \R$, constraint function $h : \calS \to \R$ ($h(s) > 0$ indicates violation), cost $c(s) \triangleq \max(h(s), 0) \in [0, C_{\text{max}}]$, and discount factors $\gamma_r, \gamma_h \in (0, 1)$. Given a stationary policy $\pi$, we write $\{s_t^\pi\}_{t \geq 0}$ for the trajectory induced by $\pi$ from $s_0^\pi = s_0$. The reward value and action-value functions are
\begin{align}
\label{eq:reward-values}
V_r^\pi(s) &= \E_\pi\!\left[\textstyle\sum_{t=0}^\infty \gamma_r^t\, r(s_t^\pi, a_t) \,\Big|\, s_0 = s\right], \notag \\
\Qr^\pi(s, a) &= \E_\pi\!\left[\textstyle\sum_{t=0}^\infty \gamma_r^t\, r(s_t^\pi, a_t) \,\Big|\, s_0 = s,\ a_0 = a\right],
\end{align}
and the standard value function for cost is $V_c^\pi(s) = \E_\pi[\sum_{t=0}^\infty \gamma_h^t c(s_t^\pi) \mid s_0 = s]$.

\paragraph{Soft cumulative-cost formulation and its limitations.}
Most online safe RL methods solve the CMDP relaxation
\begin{equation}
\label{eq:soft-cmdp}
\max_\pi \; \E_{s_0 \sim \rho_0}\!\left[V_r^\pi(s_0)\right]
\quad \text{s.t.} \quad
\E_{s_0 \sim \rho_0}\!\left[V_c^\pi(s_0)\right] \leq l,
\end{equation}
typically by primal--dual Lagrangian updates~\citep{chow2018lyapunov, tessler2018reward, ray2019benchmarking, stooke2020responsive}. The constraint is aggregate over initial states and trajectories, so even at the optimum it can be satisfied while individual trajectories violate the pointwise condition $h \le 0$~\citep{yu2022reachability, zheng2024safe}; the optimal $l$ also varies across tasks~\citep{ji2023safety}.

\paragraph{Hard state-wise constraint.}
To remove these limitations at the formulation level, we replace~\eqref{eq:soft-cmdp} by its pointwise hard-constraint version, in which $h$ must be non-positive at every step, and define the following problem that our Safe Score Matching (SSM) aims to solve:

\begin{problem}[Hard-constrained state-wise problem]
\label{prob:hard-constrained}
\begin{align}
\max_\pi \quad & \E_{s_0 \sim \rho_0}\!\left[V_r^\pi(s_0)\right] \label{eq:cmdp} \\
\text{s.t.} \quad & h(s_t^\pi) \leq 0, \quad \forall\, t \geq 0,\;\;\rho_0\text{-a.s.\ }s_0. \nonumber
\end{align}
\end{problem}

\subsection{Reachability Reformulation}
\label{sec:reachability}
\label{sec:feasibility}

\noindent Problem~\ref{prob:hard-constrained} is strictly stronger than~\eqref{eq:soft-cmdp}, but its trajectory-level quantifier $\forall\, t \geq 0$ is not directly amenable to temporal-difference learning, and infeasible initial states leave it unsatisfiable. Hamilton--Jacobi (HJ) reachability analysis~\citep{fisac2019bridging, hsu2021safety, yu2022reachability} addresses both: it produces value functions that reason only about initial states, and naturally separates reward maximization on the feasible region from recovery on its complement.

Consider the following (optimal) value function that captures the worst-case violation along a trajectory.

\begin{definition}[HJ Safety Value Functions]
\label{def:reachability-values}
For a stationary policy $\pi$, the HJ safety value function $V_h^\pi$ for $\pi$ and the optimal HJ safety value function $V_h^\star$ are
\begin{equation}
\label{eq:vh-pi}
V_h^\pi(s_0) \triangleq \max_{t \geq 0} h(s_t^\pi),
\qquad
V_h^\star(s_0) \triangleq \min_\pi V_h^\pi(s_0),
\qquad s_0^\pi = s_0.
\end{equation}
\end{definition}

By construction, $V_h^\pi(s_0) \leq 0$ holds if and only if $h(s_t^\pi) \le 0$ for all $t \ge 0$, so $V_h$ collapses the infinite-horizon trajectory constraint of Problem~\ref{prob:hard-constrained} into a constraint on the initial state. Similarly, $V_h^\star(s_0) \leq 0$ holds if and only if there exists an optimal policy $\pi$ that enforces hard constraints. We next define the following (optimal) feasible region:

\begin{definition}[Feasible regions]
\label{def:feasible-region}
The feasible region of $\pi$ and the optimal feasible region are $\calS_f^\pi \triangleq \{s \in \calS : V_h^\pi(s) \le 0\}$ and $\calS_f^\star \triangleq \{s \in \calS : V_h^\star(s) \le 0\}$.
\end{definition}

We also need the corresponding state-action quantities, which capture the worst-case violation along the trajectory rolled out from an initial state--action pair.
\begin{definition}[HJ Avoid State-Action Value Functions]
\label{def:qh}
For a stationary policy $\pi$, the HJ avoid state-action value function $\Qh^\pi$ for $\pi$ and the optimal HJ avoid state-action value function $\Qh^\star$ are
\begin{equation}
\label{eq:qh-pi}
\Qh^\pi(s, a) \triangleq \max_{t \geq 0} h(s_t^\pi),
\qquad
\Qh^\star(s, a) \triangleq \min_\pi \Qh^\pi(s, a),
\end{equation}
where $\{s_t^\pi\}_{t \geq 0}$ is the trajectory rolled out from $(s_0^\pi, a_0) = (s, a)$ under $\pi$.
\end{definition}

\noindent Under the deterministic dynamics and attainment assumptions stated above, $\Qh^\star$ satisfies the self-consistency relation
\begin{equation}
\label{eq:qh-sbe}
\Qh^\star(s, a) = \max\!\big\{h(s),\; V_h^\star(F(s, a))\big\},
\qquad
V_h^\star(s) = \min_{a \in \calA} \Qh^\star(s, a).
\end{equation}

\begin{definition}[Optimal Feasible Action Set]
\label{def:Af}
At each state $s$, the optimal feasible action set is $\calA_f^\star(s) \triangleq \{a \in \calA : \Qh^\star(s, a) \leq 0\}$.
\end{definition}

\noindent By~\eqref{eq:qh-sbe}, any stochastic policy whose support lies in $\calA_f^\star(s)$ at every feasible state keeps $\calS_f^\star$ forward-invariant by induction~\citep{yu2022reachability,zheng2024safe}; the learned implementation in Section~\ref{sec:method} approximates this support-restriction mechanism and is not by itself an oracle safety guarantee.

\paragraph{Practical $V_h^\star$ approximation.}
For cost discount $\gamma_h \in (0, 1)$, the discounted HJ Bellman operator
\begin{equation}
\label{eq:hj_bellman}
\mathcal{P}^\star Q_h(s, a) \triangleq (1 - \gamma_h)\, h(s) + \gamma_h\, \max\!\left\{h(s),\, V_h(F(s, a))\right\},
\qquad
V_h(s) = \min_{a \in \calA} Q_h(s, a),
\end{equation}
is a contraction whose fixed point $Q_{h, \gamma_h}^\star$ approaches the undiscounted optimum $\Qh^\star$ as $\gamma_h \to 1$~\citep{fisac2019bridging, so2024solving, sharpless2026dualobjective}.

With $V_h^\star$ characterized, we reformulate Problem~\ref{prob:hard-constrained} in feasibility-dependent form by combining Definitions~\ref{def:reachability-values}--\ref{def:feasible-region} with an objective on $\calS \setminus \calS_f^\star$:

\begin{problem}[Reachability-value reformulation]
\label{prob:reachability}
\begin{align}
\max_\pi \quad & \E_{s_0 \sim \rho_0}\!\left[V_r^\pi(s_0)\, \indicator{s_0 \in \calS_f^\star} \;-\; V_h^\pi(s_0)\, \indicator{s_0 \notin \calS_f^\star}\right] \label{eq:rcrl} \\
\text{s.t.} \quad & V_h^\pi(s_0) \leq 0, \quad \rho_0\text{-a.s.\ }s_0 \in \calS_f^\star. \nonumber
\end{align}
\end{problem}
\noindent On feasible initial states, the equivalence in Definition~\ref{def:reachability-values} makes Problem~\ref{prob:reachability} equivalent to Problem~\ref{prob:hard-constrained}; on infeasible initial states, the objective switches to minimizing the worst-case future violation instead of leaving the constraint vacuously unsatisfiable. This is the reachability-constrained formulation of RCRL~\citep{yu2022reachability}, which FISOR~\citep{zheng2024safe} also adopts in the offline setting.

\begin{figure}[t]
\centering
\includegraphics[width=\linewidth]{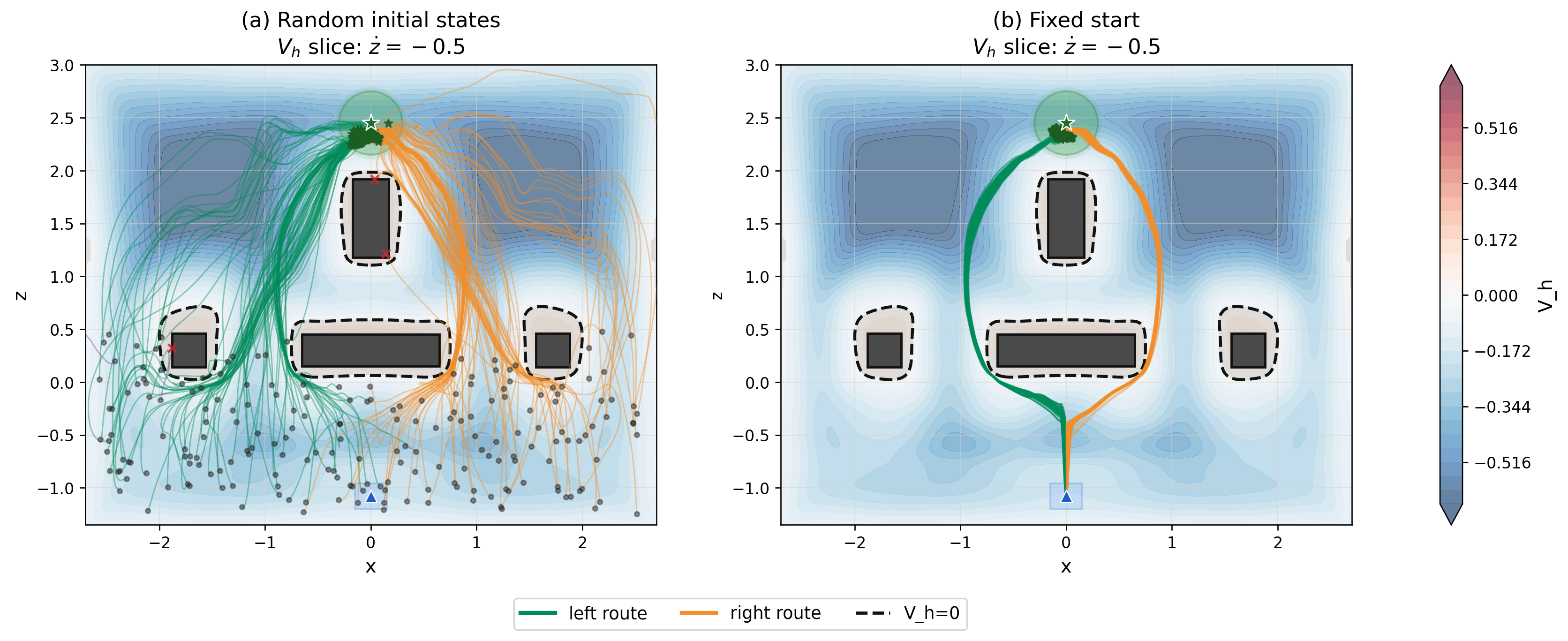}
\caption{\textbf{SSM on a planar-quadrotor stabilize-and-avoid task.} Trajectory colors indicate the two route modes learned by the policy rather than safe and unsafe outcomes: green trajectories pass to the left of the obstacles, and orange trajectories pass to the right. \emph{(a)}~Rollouts from randomly sampled initial states. \emph{(b)}~Rollouts from the nominal, symmetric initial state (blue triangle), perturbed within the small neighborhood shown by the light-blue box; from nearly the same initial state, the policy takes either route to the goal. The green star marks the goal center, and the surrounding green disk is the goal region, whose entry counts as a successful episode. Red crosses mark collisions. The background is a two-dimensional slice of the learned HJ value $\Vh$ at fixed $\dot z = -0.5$, and the dashed contour around the gray regions is its zero level set $\Vh = 0$, the estimated boundary of the viable set $\{s : \Vh(s) \le 0\}$.}
\label{fig:teaser}
\end{figure}

\section{Method: Safe Score Matching}
\label{sec:method}
\label{sec:theory_alg} %

The formulation above gives the desired hard-safety semantics, but it does not yet specify a tractable actor update for a diffusion policy. The action distribution that solves Problem~\ref{prob:reachability} at a given state can be highly complicated: $\calA_f^\star(s)$ is in general non-convex and may be disconnected, and the reward landscape over its interior may admit multiple separated high-value basins. A deterministic actor commits to one such component; a unimodal Gaussian averages across components or places mass in low-value or unsafe regions between them. On the other hand, diffusion policies~\citep{wang2022diffusion, chi2025diffusion} are expressive enough to represent complex distributions and thus become a natural fit for parametrizing the policies that solve Problem~\ref{prob:reachability}.

However, training a diffusion policy in online RL is non-trivial: a direct policy gradient through the multi-step reverse chain is expensive and high-variance. Q-Score Matching~\citep[QSM,][]{psenka2023learning} avoids this by exploiting the identity between $\nabla_a Q_r$ and the score field of the maximum-entropy reward-optimal policy: it regresses $\nabla_a Q_r$ as a time-independent target via a denoising loss, so the diffusion model learns this score directly without differentiating through the reverse chain. A primer on QSM and a discussion of its trade-offs are in Appendix~\ref{app:qsm-background}.

The QSM target, however, encodes only reward maximization. Problem~\ref{prob:reachability} adds two structural requirements that this target does not capture: a feasibility-dependent split between reward improvement on $\calS_f^\star$ and worst-case-violation descent on its complement, and a support restriction to HJ-viable actions in the feasible branch. SSM accommodates both inside the score-regression framework. Holding the critics fixed, we translate the reachability split into a one-step, action-level policy-improvement (PI) surrogate whose feasible branch encodes the support restriction (Section~\ref{sec:derivation}). Entropy-regularizing this surrogate yields a closed-form Boltzmann target, and SSM regresses its support-interior score with a QSM-style denoising loss (Section~\ref{sec:score}).

\subsection{HJ-Gated Local Policy Improvement}
\label{sec:derivation}
\label{sec:theory}             %
\label{sec:hj}                 %
\label{sec:action-feasibility} %
\label{sec:gibbs}              %

\paragraph{Local policy-improvement surrogate.}
Fix a state $s$. The state gate $s \in \calS_f^\star$ determines which objective is active, and the viable-action set $\calA_f^\star(s)$ determines which actions are admissible in the feasible branch. With oracle critics, these yield
\begin{align}
s \in \calS_f^\star :\quad \max_{\mu(\cdot \mid s)} \;\; & \E_{a \sim \mu}\!\big[\Qr^\pi(s, a)\big] \tag{PI-feasible}\label{eq:pi-feasible} \\
\text{s.t.} \quad & \operatorname{supp}\mu(\cdot \mid s) \subseteq \calA_f^\star(s), \nonumber \\[2pt]
s \notin \calS_f^\star :\quad \max_{\mu(\cdot \mid s)} \;\; & \E_{a \sim \mu}\!\big[-\Qh^\star(s, a)\big]. \tag{PI-infeasible}\label{eq:pi-infeasible}
\end{align}
The feasible subproblem is reward improvement restricted to actions that preserve the oracle feasible set. The infeasible subproblem is not a recovery guarantee: when $s \notin \calS_f^\star$, hard safety is generally unenforceable by definition, so the local objective biases actions toward smaller worst-case violation. We therefore interpret~\eqref{eq:pi-feasible}--\eqref{eq:pi-infeasible} as fixed-critic actor targets rather than an exact decomposition of the trajectory-level control problem. The split mirrors the reachability-dependent objective used in RCRL and FISOR~\citep{yu2022reachability,zheng2024safe}, but our use differs: FISOR is an offline method that extracts a diffusion policy through feasibility-dependent weighted behavior cloning of a fixed dataset, whereas SSM trains an online diffusion actor by regressing the score field derived below.

\paragraph{Entropy-regularized target density.}
To obtain a density target amenable to score regression, we consider the entropy-regularized form of~\eqref{eq:pi-feasible}--\eqref{eq:pi-infeasible}: in feasible states we maximize $\E[\alpha_r\Qr^\pi(s,a)]+\Hent(\mu(\cdot\mid s))$ over densities supported on $\calA_f^\star(s)$, and in infeasible states we maximize $\E[-\beta\Qh^\star(s,a)]+\Hent(\mu(\cdot\mid s))$ over $\calA$, with inverse temperatures $\alpha_r,\beta>0$. Theorem~\ref{thm:gibbs-two-end} gives the closed-form target. Theorems~\ref{thm:gibbs-two-end} and~\ref{thm:two-end-score} hold for fixed critics: they specify the target that a single actor update should match, and Section~\ref{sec:practical-algorithm} approximates this target with learned critics and a finite reverse chain.
\begin{theorem}[HJ-Gated Boltzmann Target]
\label{thm:gibbs-two-end}
The entropy-regularized local subproblem has the unique maximizer
\begin{equation}
\label{eq:pi-gibbs}
\pi^\star(a \mid s) =
\begin{cases}
\dfrac{\exp(\alpha_r\, \Qr^\pi(s, a))\, \indicator{\Qh^\star(s, a) \leq 0}}{Z_f(s)}, & s \in \calS_f^\star, \\[10pt]
\dfrac{\exp(-\beta\, \Qh^\star(s, a))}{Z_{\mathrm{rec}}(s)}, & s \notin \calS_f^\star,
\end{cases}
\end{equation}
where $Z_f(s)$ and $Z_{\mathrm{rec}}(s)$ are the corresponding partition functions.
\end{theorem}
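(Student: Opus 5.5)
The plan is to treat each state separately and use the Gibbs variational principle, i.e. the KL-divergence identity. Fix $s$ and write $E(a)$ for the energy of the active branch: $E(a) = \alpha_r \Qr^\pi(s,a)$ on the admissible set $\calA_s \triangleq \calA_f^\star(s)$ when $s \in \calS_f^\star$, and $E(a) = -\beta \Qh^\star(s,a)$ on $\calA_s \triangleq \calA$ when $s \notin \calS_f^\star$. The local objective is
\[
J(\mu) = \E_{a\sim\mu}[E(a)] + \Hent(\mu),
\]
maximized over densities $\mu$ with $\operatorname{supp}\mu \subseteq \calA_s$. The candidate maximizer is $\pi^\star(a) = e^{E(a)}\indicator{a\in\calA_s}/Z(s)$ with $Z(s) = \int_{\calA_s} e^{E(a)}\,da$. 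This is exactly the two cases of~\eqref{eq:pi-gibbs}, since $\indicator{\Qh^\star(s,a)\le 0}$ is the indicator of $\calA_f^\star(s)$ by Definition~\ref{def:Af}.

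\textbf{Step 1: make the target well defined.} I need $0 < Z(s) < \infty$, and I will state the assumptions that give this.
\begin{itemize}
\item For the upper bound, assume $\calA$ is compact (or has finite Lebesgue measure) and the critics are bounded; this holds since $\Qh^\star$ takes values in the range of $h$.
\item For positivity in the feasible branch, $s \in \calS_f^\star$ together with~\eqref{eq:qh-sbe} gives $\min_a \Qh^\star(s,a) = V_h^\star(s) \le 0$, so $\calA_f^\star(s)$ is nonempty.
\item Nonemptiness is not enough: I also need $\calA_f^\star(s)$ to have positive Lebesgue measure. I will state this as a regularity assumption. It is implied, for example, by $V_h^\star(s) < 0$ together with continuity of $\Qh^\star(s,\cdot)$, or otherwise by assumption. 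If the set were null, no density could be supported on it and the problem would be vacuous.
\end{itemize}

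\textbf{Step 2: the KL identity.} For any admissible $\mu$ with $J(\mu) > -\infty$, substitute $E(a) = \log \pi^\star(a) + \log Z(s)$ on $\calA_s$. This gives
\[
J(\mu) = \log Z(s) - \mathrm{KL}(\mu \,\|\, \pi^\star).
\]
The substitution is legitimate because $\mu$ vanishes outside $\calA_s$, so the only values of $\log\pi^\star$ that enter are finite ones. By Gibbs' inequality $\mathrm{KL} \ge 0$, with equality iff $\mu = \pi^\star$ almost everywhere. Hence $\pi^\star$ is a maximizer, attains the value $\log Z(s)$, and is unique up to null sets. That yields the uniqueness claim, understood as uniqueness of densities a.e.

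\textbf{Step 3: combine the branches.} The branch is chosen by the gate $s \in \calS_f^\star$, which does not depend on $\mu$. The subproblems at different states are therefore decoupled, and assembling the per-state maximizers gives the piecewise formula~\eqref{eq:pi-gibbs}.

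\textbf{Main obstacle.} The optimization is essentially a one-line calculation; the real difficulty is Step 1 in the feasible branch, namely justifying that the constrained admissible set has positive measure and that the entropy and expectation are finite. I would handle this by stating these as standing regularity assumptions on the fixed critics and action space. A further subtlety concerns states with $V_h^\star(s) = 0$ where $\calA_f^\star(s)$ may have measure zero. For these I would note that the result then holds only in a degenerate sense, or exclude them explicitly.
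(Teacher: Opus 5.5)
Your proposal is correct and follows essentially the same route as the paper's proof. You fix $s$, rewrite the objective as $\log Z(s) - \mathrm{KL}(\mu \,\|\, \pi^\star)$ via the Gibbs/Donsker--Varadhan identity, and conclude by Gibbs' inequality branch by branch. The paper likewise takes $\lambda(\calA_f^\star(s)) > 0$ and finite $Z_f, Z_{\mathrm{rec}}$ as standing assumptions, and it states uniqueness up to Lebesgue-a.e.\ equivalence as you do; it assumes finite partition functions directly, which is slightly weaker than your bounded-critic condition.
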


We prove Theorem~\ref{thm:gibbs-two-end} in
Appendix~\ref{app:proof-gibbs} via the Gibbs /
Donsker--Varadhan variational identity. Differentiating the
log-density in \eqref{eq:pi-gibbs} yields the
score used by SSM.

\begin{theorem}[Score of the HJ-Gated Boltzmann Target]
\label{thm:two-end-score}
The score of $\pi^\star$ in~\eqref{eq:pi-gibbs} is
\begin{equation}
\label{eq:interior-score}
\nabla_a \log \pi^\star(a \mid s) =
\begin{cases}
\alpha_r\, \nabla_a \Qr^\pi(s, a), & s \in \calS_f^\star,\; \Qh^\star(s, a) < 0, \\[3pt]
-\beta\, \nabla_a \Qh^\star(s, a), & s \notin \calS_f^\star.
\end{cases}
\end{equation}
\end{theorem}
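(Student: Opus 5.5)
The argument takes Theorem~\ref{thm:gibbs-two-end} as given and differentiates the logarithm of~\eqref{eq:pi-gibbs} one case at a time. In each case it first isolates an open set of actions on which the density is a smooth exponential divided by a constant. It then uses the fact that $Z_f(s)$ and $Z_{\mathrm{rec}}(s)$ depend only on $s$, so their logarithms drop out under $\nabla_a$. We assume, as the statement implicitly does, that $\Qr^\pi(s,\cdot)$ and $\Qh^\star(s,\cdot)$ are differentiable in $a$ and that both partition functions are finite and positive. For $Z_f$, this means $\calA_f^\star(s)$ has positive measure whenever $s\in\calS_f^\star$.

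\emph{Infeasible branch.} Fix $s\notin\calS_f^\star$. Then $\pi^\star(a\mid s)=\exp(-\beta\Qh^\star(s,a))/Z_{\mathrm{rec}}(s)$ is strictly positive on all of $\calA$, so $\log\pi^\star(a\mid s)=-\beta\Qh^\star(s,a)-\log Z_{\mathrm{rec}}(s)$ holds everywhere. Differentiating in $a$ gives $-\beta\nabla_a\Qh^\star(s,a)$ directly.

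\emph{Feasible branch.} Fix $s\in\calS_f^\star$ and an action $a$ with $\Qh^\star(s,a)<0$. The step that needs care is the indicator $\indicator{\Qh^\star(s,a)\le 0}$. The density vanishes outside $\calA_f^\star(s)$, so its logarithm is $-\infty$ there, and the score is not defined on the boundary $\{\Qh^\star(s,\cdot)=0\}$. This is exactly why the statement keeps only the strict inequality. To handle it:
\begin{itemize}
\item Continuity of $\Qh^\star(s,\cdot)$, implied by the assumed differentiability, gives an open neighbourhood $U\ni a$ on which $\Qh^\star(s,a')<0$.
\item On $U$ the indicator is identically $1$, so $\log\pi^\star(a'\mid s)=\alpha_r\Qr^\pi(s,a')-\log Z_f(s)$ for all $a'\in U$.
\item The gradient of this expression at $a$ is $\alpha_r\nabla_a\Qr^\pi(s,a)$.
\end{itemize}
The indicator therefore contributes nothing to the gradient at interior points of the viable set. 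Its only role is to fix the support, which matches the ``support-interior score'' interpretation used later in the method section.

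Putting the two branches together gives~\eqref{eq:interior-score}. The only genuine obstacle is the support and boundary issue in the feasible branch. Everything else is the routine observation that normalizers are constant in $a$. I would also remark that on the boundary, and at actions with $\Qh^\star(s,a)>0$, the score is left undefined on purpose, and the practical algorithm gates those actions separately.
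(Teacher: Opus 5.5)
Your proposal is correct and follows essentially the same route as the paper's proof. Both take logarithms of the Boltzmann target, drop the $a$-independent log-partition terms, and use continuity of $\Qh^\star(s,\cdot)$ so that $\{\Qh^\star(s,\cdot)<0\}$ is open with the indicator identically one there; like the paper, you should state both branches only for $a \in \mathrm{int}(\calA)$, since $\calA$ is compact and the gradient is two-sided only at interior points.
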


\noindent We prove this in Appendix~\ref{app:proof-two-end}, where Proposition~\ref{prop:zero-temp} also gives a complementary zero-temperature concentration result. On a viable state, the target is the maximum-entropy reward target of QSM restricted to the actions that the safety critic admits. On an infeasible state, it replaces the reward by the worst-case violation and favors actions predicted to reduce it. In the unconstrained limit $\calS_f^\star = \calS$ and $\calA_f^\star(s) = \calA$ the gate is inactive, and~\eqref{eq:interior-score} reduces to the QSM reward score $\alpha_r \nabla_a \Qr^\pi(s, a)$~\citep{psenka2023learning}.

The closed-form target~\eqref{eq:pi-gibbs} also makes the multimodality of the feasible branch concrete: its modes may arise from peaks of $\Qr^\pi(s, \cdot)$, from disconnected components of $\calA_f^\star(s)$, or from their intersection. An energy-based sampler matches the same target but requires iterative MCMC; a diffusion denoiser trained by score regression instead preserves the closed-form, the entropy regularization, and finite-step sampling at once.

\subsection{SSM Denoising Target}
\label{sec:score} %

\paragraph{Diffusion parameterization.}
We parameterize the actor with a finite-step diffusion sampler over actions~\citep{ho2020denoising}. Given a clean action $a_0$ from the replay buffer and a variance-preserving schedule $\{\beta_t\}_{t = 1}^T$, unrelated to the recovery coefficient $\beta$ of Theorem~\ref{thm:gibbs-two-end}, with $\alpha_t = 1 - \beta_t$, $\bar\alpha_t = \prod_{i = 1}^t \alpha_i$, and $\sigma(t) = \sqrt{1 - \bar\alpha_t}$, the forward noising process is
\begin{equation}
\label{eq:forward}
a_t = \sqrt{\bar\alpha_t}\, a_0 + \sigma(t)\, \epsilon, \qquad \epsilon \sim \mathcal{N}(0, I).
\end{equation}
At evaluation time, the policy samples from $a_T \sim \mathcal{N}(0, I)$ by the projected reverse update
\begin{equation}
\label{eq:ddpm_reverse}
a_{t - 1} = \Pi_\calA\!\!\left(\tfrac{1}{\sqrt{\alpha_t}}\!\left(a_t - \tfrac{\beta_t}{\sqrt{1 - \bar\alpha_t}}\, \epsilon_\theta(s, a_t, t)\right) + \tilde\sigma_t\, z\right),
\qquad z \sim \mathcal{N}(0, I),
\end{equation}
where $\tilde\sigma_t = \sqrt{\beta_t}$ for $t > 1$, $\tilde\sigma_1 = 0$, and $\Pi_\calA$ is coordinate-wise projection onto $\calA$. We retain the symbol $\epsilon_\theta$ to align with QSM~\citep{psenka2023learning}: the regression target below is a time-independent function of $(s, a_t)$ rather than the forward noise $\epsilon$, so $\epsilon_\theta$ functions at sampling time as a learned guidance vector field plugged into~\eqref{eq:ddpm_reverse}, not as a strict predictor of $\epsilon$.

\paragraph{HJ-gated guidance field.}
The clean-action score from Theorem~\ref{thm:two-end-score} defines the guidance field
\begin{equation}
\label{eq:score_target}
\bar\varphi(s, a) =
\begin{cases}
\alpha_r\, \gqr(s, a), & \Vh(s) \leq 0,\; \Qh(s, a) \leq 0 \quad \text{(reward branch)}, \\[3pt]
-\beta\, \gqh(s, a), & \Vh(s) > 0 \quad \text{(recovery branch)},
\end{cases}
\end{equation}
with $\bar\varphi(s, a) = 0$ otherwise. The zero case corresponds to actions classified as nonviable while the state is feasible; the target density has zero support there, so the classical score is not defined. Appendix~\ref{app:impl-algo} describes the guidance that our implementation assigns on this set and the normalization of the reward branch.

SSM trains the reverse-chain vector field by the squared regression loss
\begin{align}
\epsilon_{\mathrm{target}}(s, a_t)
&= -M_q\, \bar\varphi(s, a_t),
\label{eq:eps-target}\\
\calL_{\mathrm{SSM}}(\theta)
&= \E_{(s, a_0) \sim \calB,\, t,\, \epsilon}\!\left[\big\|\epsilon_\theta(s, a_t, t) - \epsilon_{\mathrm{target}}(s, a_t)\big\|_2^2\right],
\label{eq:loss}
\end{align}
where $M_q > 0$ is a guidance-strength coefficient, $t \sim \mathrm{Unif}\{1, \ldots, T\}$, $a_t$ is generated by~\eqref{eq:forward}, and gradients do not propagate through $\bar\varphi$. At the population optimum, the learned vector field equals $-M_q \bar\varphi$ under the training distribution; substituting this into~\eqref{eq:ddpm_reverse} adds a positive-coefficient component in the direction of $\bar\varphi$, hence in the direction of the HJ-gated local policy-improvement target.

\section{Practical Algorithm}
\label{sec:practical-algorithm}
\label{sec:learning-objectives} %

Section~\ref{sec:method} derived the score regression target~\eqref{eq:score_target} and loss~\eqref{eq:loss} for given critics $(\Vh, \Qh, \Qr)$. In the online algorithm these critics are learned from replay by off-policy temporal-difference regression and substituted into~\eqref{eq:score_target}. The reward-critic and denoiser updates follow standard practice and are deferred, together with the full pseudocode, to Appendix~\ref{app:impl-algo}. This section describes the two implementation choices specific to SSM: training the HJ safety critic and approximating the state gate that routes the two score branches.

\paragraph{Safety critic and HJ gate.}
We train $\Qh$ by mean-squared regression toward the discounted HJ Bellman target~\eqref{eq:hj_bellman}, using an EMA target network for the bootstrap. The Bellman target requires the inner minimum $\Vh(s) = \min_{a \in \calA} \Qh(s, a)$, which we estimate by finite-candidate minimization,
\begin{equation}
\label{eq:vh-mc}
\widehat \Vh(s) \;=\; \min_{a \in \calC_K(s)} \Qh(s, a),
\end{equation}
where $\calC_K(s)$ contains a single sample $a_\theta \sim \pi_\theta(\cdot \mid s)$ and $K-1$ Gaussian perturbations of $a_\theta$, for $K$ candidates per evaluation. Since $\calC_K(s) \subseteq \calA$, $\widehat \Vh(s) \ge \Vh(s)$ at fixed critic (Proposition~\ref{prop:conservative-gate}); in the online loop the bootstrap itself uses $\widehat \Vh$, so this conservatism is a gate-level property and does not extend to the learned critic. We use $\widehat \Vh(s) \le 0$ as the feasibility indicator in~\eqref{eq:score_target}. Candidate construction and an expectile-smoothed alternative used on Quad2D are in Appendices~\ref{app:vh-estimator} and~\ref{app:expectile}.

\paragraph{Learned-critic error.}
A bound on the error of the learned critic becomes a margin for both gates.

\begin{proposition}[Gate margin under bounded critic error]
\label{prop:gate-margin}
Let $V_h^\star(s) = \min_{a \in \calA} \Qh^\star(s, a)$, and suppose that $|\Qh(s, a) - \Qh^\star(s, a)| \le \varepsilon_Q$ for every state--action pair queried by the gates. Then
\begin{equation}
\label{eq:gate-margin}
\widehat \Vh(s) \le -\varepsilon_Q \;\Longrightarrow\; V_h^\star(s) \le 0,
\qquad
\Qh(s, a) \le -\varepsilon_Q \;\Longrightarrow\; \Qh^\star(s, a) \le 0.
\end{equation}
With zero thresholds, a state or an action is classified as feasible in error only if $0 < V_h^\star(s) \le \varepsilon_Q$ or $0 < \Qh^\star(s, a) \le \varepsilon_Q$, respectively.
\end{proposition}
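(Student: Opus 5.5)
The plan is a direct two-line comparison argument, proving the action-level implication first and then the state-level one, which relies on it.

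\emph{Action gate.} Suppose $\Qh(s,a) \le -\varepsilon_Q$. The uniform error bound gives $\Qh^\star(s,a) \le \Qh(s,a) + \varepsilon_Q \le 0$, which is the second implication in~\eqref{eq:gate-margin}.

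\emph{State gate.} Suppose $\widehat\Vh(s) \le -\varepsilon_Q$.
\begin{itemize}[leftmargin=*, itemsep=2pt, topsep=2pt]
    \item By~\eqref{eq:vh-mc}, $\widehat\Vh(s)$ is a minimum of $\Qh(s,\cdot)$ over the finite set $\calC_K(s)$, so it is attained at some candidate $\hat a \in \calC_K(s)$ with $\Qh(s,\hat a) = \widehat\Vh(s) \le -\varepsilon_Q$.
    \item This candidate is a queried state--action pair, so the hypothesis applies to it and the action-level step gives $\Qh^\star(s,\hat a) \le 0$.
    \item Since $\calC_K(s) \subseteq \calA$ (candidates are projected or clipped into $\calA$, as Proposition~\ref{prop:conservative-gate} already uses), $\hat a \in \calA$.
    \item Hence $V_h^\star(s) = \min_{a \in \calA} \Qh^\star(s,a) \le \Qh^\star(s,\hat a) \le 0$.
\end{itemize}
This is the same one-sided conservatism as Proposition~\ref{prop:conservative-gate}, now applied to the oracle critic.

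\emph{Zero-threshold statement.} Take the contrapositive with the other side of the error bound.
\begin{itemize}[leftmargin=*, itemsep=2pt, topsep=2pt]
    \item \emph{Action.} If $\Qh(s,a) \le 0$, then $\Qh^\star(s,a) \le \Qh(s,a) + \varepsilon_Q \le \varepsilon_Q$. A misclassification means $\Qh^\star(s,a) > 0$, so it can occur only when $0 < \Qh^\star(s,a) \le \varepsilon_Q$.
    \item \emph{State.} If $\widehat\Vh(s) \le 0$, the same minimizer $\hat a$ satisfies $\Qh(s,\hat a) \le 0$. Then $V_h^\star(s) \le \Qh^\star(s,\hat a) \le \varepsilon_Q$. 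A false-feasible state therefore satisfies $0 < V_h^\star(s) \le \varepsilon_Q$.
\end{itemize}

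\emph{Main point of care.} The only delicate step is that the error bound is assumed only on queried pairs. The argument must therefore route through the attained minimizer $\hat a \in \calC_K(s)$, rather than comparing $\min_{\calA}\Qh$ with $\min_{\calA}\Qh^\star$ over all of $\calA$. That global comparison would need the bound everywhere, and it would also involve the true $\Vh$ rather than $\widehat\Vh$. Because $\calC_K(s)$ is finite, the minimum is attained and no further attainment assumption is needed. The step also requires $\calC_K(s) \subseteq \calA$, which the paper asserts in Section~\ref{sec:practical-algorithm}.
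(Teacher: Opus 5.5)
Your proposal is correct and follows essentially the same route as the paper's proof. Both arguments take an attained minimizer $\hat a \in \calC_K(s)$, use $\calC_K(s) \subseteq \calA$ to obtain $V_h^\star(s) \le \Qh^\star(s,\hat a) \le \Qh(s,\hat a) + \varepsilon_Q = \widehat\Vh(s) + \varepsilon_Q$, and derive the action case and the zero-threshold statement from the same one-sided inequality; the only difference is that you prove the action implication first and reuse it, whereas the paper writes the state chain first and notes that the action case is the same chain with $\hat a$ replaced by $a$.
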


Appendix~\ref{app:gate-margin} gives the proof. Replacing both zero thresholds by $-\varepsilon_Q$ therefore keeps an optimistic critic error of size $\varepsilon_Q$ from admitting an unsafe state or action to the reward branch, as in constraint tightening with approximate value functions~\citep{chatzikiriakos2024softmpc}. Our experiments keep zero thresholds (safety margin $\delta = 0$ in Table~\ref{tab:hyperparams-shared}), so under the bound such errors are confined to states and actions whose exact safety value lies in $(0, \varepsilon_Q]$. The proposition assumes $\varepsilon_Q$ rather than establishing it, and Section~\ref{sec:discussion} discusses why such a bound is hard to obtain.

\paragraph{Candidate-set ablation.}
We ablate the candidate set~\eqref{eq:candidate-set} on Quad3D. The task metric is the \emph{terminal tracking $\ell_1$}, the mean $\ell_1$ distance $\|s_T - s_{\mathrm{ref}}\|_1$ between the final state and the target over evaluation episodes, including those that end in a crash. Local Gaussian proposals are essential: replacing them with independent reverse-chain samples substantially increases the terminal tracking $\ell_1$, because samples from $\pi_\theta$ cluster around its modes and miss low-$\Qh$ actions between them. The proposal scale has an interior optimum at $\sigma_\eta = 0.3$. Increasing $K$ from $8$ to $32$ at $\sigma_\eta = 0.3$ eliminates the observed episode violations at a higher terminal tracking $\ell_1$, so $K$ acts as a task--safety operating knob. Appendix~\ref{app:ablation-quad3d} reports the full $K \times \sigma_\eta$ grid and an expectile-$\Vh^\tau$ control (Table~\ref{tab:ablation-full}).

\section{Experiments}
\label{sec:experiments}

We ask whether SSM improves task performance over primal--dual and reachability-based baselines without accepting more unsafe behavior, and whether this holds on a standard safe RL benchmark.

\paragraph{Benchmarks.}
The main suite contains three safety-critical control tasks of increasing complexity (Figure~\ref{fig:benchmarks}). Quad2D tracks a circular reference inside a narrow altitude band, Quad3D regulates a nine-state quadrotor to a target state, and F16 stabilizes a 16-state fixed-wing aircraft in a low-altitude band within flight-envelope and position bounds. We also evaluate on the HalfCheetah and Swimmer velocity tasks of Safety-Gymnasium~\citep{ji2023safety} and use its CarGoal1 navigation task for two further studies. Appendix~\ref{app:dynamics} specifies the control tasks, and Appendix~\ref{app:safety-gym} the Safety-Gymnasium setup.

\begin{figure}[t]
    \centering
    \includegraphics[width=0.82\linewidth]{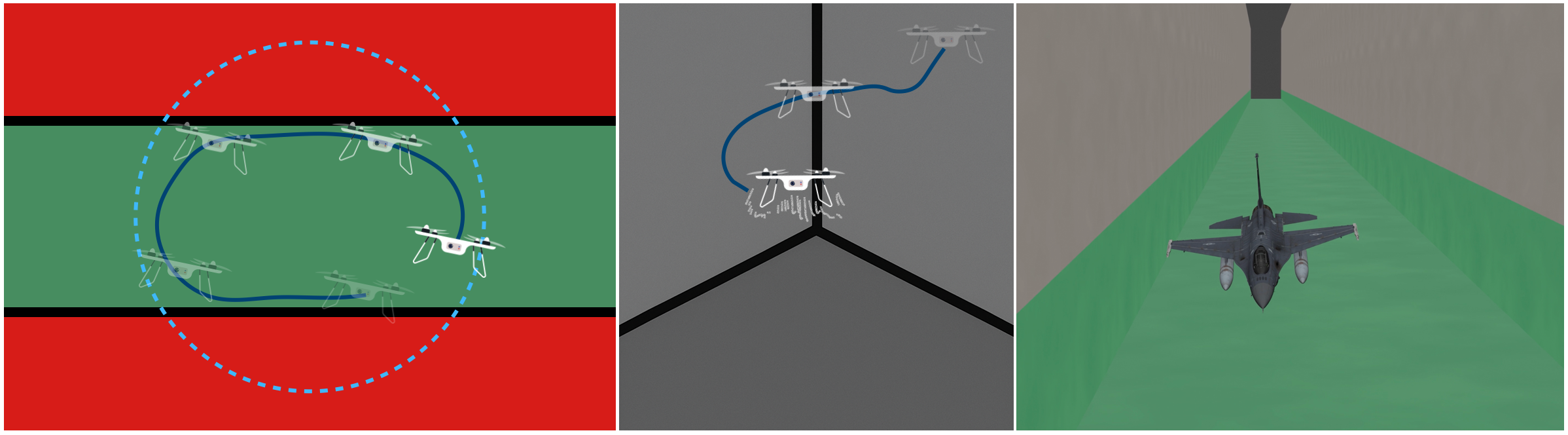}
    \caption{\textbf{Benchmark environments.} Renderings of the Quad2D, Quad3D, and F16 tasks.}
    \label{fig:benchmarks}
\end{figure}

\paragraph{Baselines.}
CAL~\citep{wu2024offpolicy} is an off-policy primal--dual method that enforces an expected-cost budget with a conservative augmented Lagrangian; on Safety-Gymnasium we add ALGD~\citep{algd2026}, a primal--dual method with a diffusion actor. RAC~\citep{yu2022reachability} and RESPO~\citep{ganai2023iterative} instead optimize state-wise reachability objectives; RAC is off-policy, and RESPO is on-policy with substantially more environment steps. EFPPO~\citep{so2023solving} solves the stabilize-avoid problem in epigraph form and is evaluated on F16, the task on which it was introduced. Table~\ref{tab:taxonomy} summarizes these properties. We rerun every baseline in our setup. On the control tasks, each baseline is trained for its recommended number of iterations or for at least three hours, and every method is evaluated at its best checkpoint. On Safety-Gymnasium, each method is trained with five seeds and evaluated at a fixed endpoint (Appendix~\ref{app:safety-gym}).

\paragraph{Metrics.}
The task metrics are the Quad2D tracking error, the Quad3D terminal tracking $\ell_1$ (Section~\ref{sec:practical-algorithm}), and the F16 stabilization rate, the fraction of episodes that hold the target altitude band for 50 consecutive steps. On Quad3D we also report the final-50-step tracking $\ell_1$, which averages the same distance over the last 50 steps and thus measures whether the vehicle stays stabilized. For safety, each method classifies every evaluated initial state as safe or unsafe with its learned safety estimate. A rollout of its policy then decides whether the state is actually safe, that is, whether $h(s_t) \le 0$ at every step. With safe as the positive class, TP counts safe states predicted safe, FP unsafe states predicted safe, TN unsafe states predicted unsafe, and FN safe states predicted unsafe. The safety precision, safe coverage, and false-safe rate (FSR) are
\begin{equation}
\label{eq:safety-metrics}
\text{Precision} = \frac{TP}{TP + FP},
\quad
\text{Coverage} = \frac{TP + FP}{TP + TN + FP + FN},
\quad
\text{FSR} = \frac{FP}{TP + FP}.
\end{equation}
The three metrics are read jointly: a method can lower its false-safe rate by predicting fewer states safe, which lowers coverage, and it can raise coverage by admitting more unsafe states, which lowers precision. Appendix~\ref{app:eval-protocol} details the evaluation protocol.

\begin{figure}[t]
    \centering
    \includegraphics[width=\linewidth]{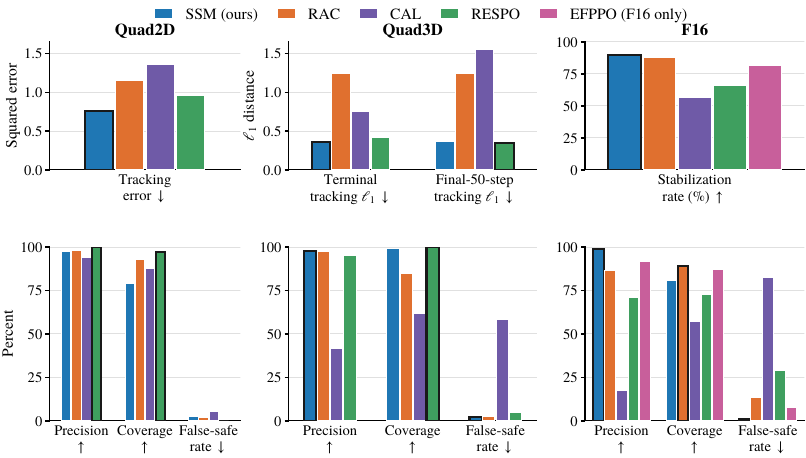}
    \caption{\textbf{Task performance (top) and safety prediction (bottom).} Arrows mark the preferred direction, and a dark outline marks the best method for each metric. Precision and the false-safe rate are computed over the initial states that a method predicts safe, and coverage is the fraction of states it predicts safe~\eqref{eq:safety-metrics}.}
    \label{fig:metrics}
\end{figure}

\paragraph{Results on the control benchmarks.}
SSM attains the lowest Quad2D tracking error, the lowest Quad3D terminal tracking $\ell_1$, and the highest F16 stabilization rate (Figure~\ref{fig:metrics}), while RESPO attains a slightly lower Quad3D final-50-step tracking $\ell_1$. On Quad3D and F16, SSM combines this task performance with the highest precision. On F16, RAC and EFPPO, the baselines closest to SSM in stabilization rate, accept more unsafe states. CAL, which constrains the cost only in expectation, accepts the most unsafe states on every task. Quad2D, whose reference repeatedly approaches the constraint boundary, is the exception: SSM predicts the fewest states safe, and RESPO attains higher precision and coverage. This pattern is consistent with the design of SSM. The gate restricts reward guidance to states and actions that the safety critic classifies as viable, and the diffusion actor keeps several viable routes (Figure~\ref{fig:teaser}) instead of committing to one.

\begin{table}[t]
\caption{\textbf{Safety-Gymnasium velocity tasks.} Final episodic reward and cost (mean $\pm$ SD over five seeds). The cost counts the steps whose velocity exceeds the task threshold; a method is within budget on a task when its mean cost over the five seeds is at most $d = 25$, the budget that CAL and ALGD train against. RESPO trains for 10M environment steps and the others for 1M. Red: mean cost above budget. Among entries within budget, bold blue and bold green mark the highest and second-highest reward. SSM uses the posterior-target implementation of Appendix~\ref{app:safety-gym}.}
\label{tab:safety-gym}
\centering
\footnotesize
\renewcommand{\arraystretch}{0.95}
\setlength{\tabcolsep}{4pt}
\begin{tabular}{llccccc}
\toprule
Task & Metric & RESPO & CAL & RAC & ALGD & SSM (ours) \\
\midrule
\multirow{2}{*}{Swimmer} & Reward & $37\pm1$ & \ob{31\pm7} & \bestR{75\pm57} & \secondR{49\pm4} & $45\pm6$ \\
 & Cost & $8.4\pm1.3$ & \ob{27.9\pm21.3} & $14.5\pm28.4$ & $6.5\pm2.4$ & $0.5\pm0.4$ \\
\specialrule{0.3pt}{0.6pt}{0.6pt}
\multirow{2}{*}{HalfCheetah} & Reward & \secondR{2323\pm323} & \ob{2631\pm107} & \ob{5014\pm3781} & \ob{2695\pm99} & \bestR{2754\pm13} \\
 & Cost & $8.4\pm3.9$ & \ob{28.3\pm12.5} & \ob{391.7\pm536.3} & \ob{42.0\pm39.7} & $0.0\pm0.1$ \\
\midrule
\multicolumn{2}{l}{Tasks within budget} & 2/2 & 0/2 & 1/2 & 1/2 & 2/2 \\
\bottomrule
\end{tabular}
\end{table}

\paragraph{Safety-Gymnasium.}
SSM has the lowest mean cost on both velocity tasks and the smallest spread in cost across seeds (Table~\ref{tab:safety-gym}). On HalfCheetah, it attains the highest reward among the entries within budget, whereas CAL, ALGD, and RAC exceed the budget. Two of the five RAC seeds violate the constraint at almost every step (Appendix~\ref{app:safety-gym}). On Swimmer, SSM keeps its cost at 0.5, more than ten times lower than that of any other method, while RAC and ALGD attain higher reward within budget and RAC has by far the largest spread in reward across seeds. This near-zero cost is consistent with the state-wise objective of SSM, which targets $h(s_t) \le 0$ at every step rather than an expected cost below the budget that CAL and ALGD train against. The cost of SSM peaks within the first 100k environment steps and remains low afterwards (Figure~\ref{fig:velocity-curves}).

\paragraph{Additional studies.}
On CarGoal1, varying $\alpha_r$, $\beta$, or $M_q$ from $0.25\times$ to $4\times$ its default changes the reward and the fraction of violation-free episodes only slightly (Appendix~\ref{app:safety-gym}). Under the same expected-cost formulation, a diffusion actor trained by QSM attains higher reward and lower cost than a Gaussian SAC actor, so the policy class affects the reward--cost trade-off even without HJ routing.

\paragraph{Computation.}
On Quad3D, SSM reaches its reported performance with 4 to 6 times fewer environment steps than RAC and RESPO. It trains in about one hour, compared with about 2.5 hours for RESPO, the fastest baseline in wall-clock time. At deployment, SSM runs one five-step reverse chain per action.

\section{Conclusion and Limitations}
\label{sec:discussion}
\label{sec:discussion-limitations} %

We presented Safe Score Matching (SSM), an online actor--critic method that trains a diffusion policy under a state-wise HJ reachability formulation. Its actor target sends reward guidance to states and actions that the safety critic classifies as viable and recovery guidance to infeasible states. A QSM-style denoising loss regresses this target without backpropagating through the reverse chain. On quadrotor and fixed-wing benchmarks, SSM attains the best or near-best task metric with low false-safe rates, and on Safety-Gymnasium velocity tasks it attains the lowest cost with competitive reward.

The main limitation is the reliance of SSM on a learned safety critic. Theorems~\ref{thm:gibbs-two-end}--\ref{thm:two-end-score} characterize the actor target for fixed critics, and Proposition~\ref{prop:gate-margin} turns a critic-error bound $\varepsilon_Q$ into a gate margin without establishing that bound. Learning an accurate HJ safety critic online from nonstationary replay data remains an open challenge for safe RL, and it is hard to know in general when such a bound holds. Progress on this problem would directly strengthen methods such as SSM. The analysis also assumes deterministic dynamics. Extending the safety value to probabilistic or robust (HJ--Isaacs) reachability, and validating the resulting policies on real robots, are important next steps.\label{endofmaintext}

\bibliographystyle{plainnat}
\bibliography{references}

\appendix

\providecommand{\KL}{\mathrm{KL}}

\section{Extended Related Work}
\label{app:related}
\paragraph{Method positioning at a glance.}
Table~\ref{tab:positioning} summarizes how SSM relates to existing safe RL families along four dimensions: online vs.\ offline training, the safety semantics enforced, the policy class, and the actor update rule.

\paragraph{Expected-cost safe RL.}
A large body of online safe RL formulates safety as a CMDP with an expected cumulative-cost budget and solves it by primal--dual or Lagrangian updates: CPO~\citep{achiam2017constrained}, RCPO~\citep{tessler2018reward}, FOCOPS~\citep{zhang2020first}, SAUTE~\citep{sootla2022saute}, PPO-Lag and SAC-Lag~\citep{ray2019benchmarking}, PID-Lagrangian~\citep{stooke2020responsive}, CUP~\citep{yang2022constrained}, CVPO~\citep{liu2022constrained}, CCAC~\citep{liu2023towards}, and off-policy CAL~\citep{wu2024offpolicy}. Although these methods enjoy convergence guarantees under mild assumptions~\citep{altman2021constrained}, the constraint is enforced only on average and the primal--dual loop is unstable in practice~\citep{so2023solving, zhang2025discrete}. Most of these methods are on-policy by design; off-policy variants such as SAC-Lag, CVPO, and CAL exist but are particularly susceptible to value-estimation error, where distributional shift compounds critic error and induces oscillating dual variables~\citep{wu2024offpolicy}.

\paragraph{State-wise / reachability-based safe RL.}
A second line of methods enforces state-wise zero-violation by constraining the policy to a control-invariant set: Lyapunov-based methods~\citep{chow2018lyapunov, zhang2024learning}, control barrier functions~\citep{qin2022sablas, ma2021model, so2024train}, safety shielding~\citep{wagener2021safe}, projection-based filters~\citep{lin2024projection}, and Hamilton--Jacobi (HJ) reachability analysis~\citep{fisac2019bridging, hsu2021safety, ganai2024hamilton, choi2025data}. Within HJ-based safe RL, FAC~\citep{ma2021feasible} introduces a state-wise Lagrange multiplier with complementary slackness; RCRL~\citep{yu2022reachability} uses HJ self-consistency to characterize the largest feasible set; RESPO~\citep{ganai2023iterative} formulates a feasibility-dependent objective in stochastic settings; FRPI~\citep{qin2024feasible} provides a feasible reachable policy iteration; and DOHJ-PPO~\citep{sharpless2026dualobjective} extends HJ-based PPO to reach-always-avoid and reach-reach problems via novel decomposed Bellman formulations. All of these use Gaussian or deterministic actors, which are limited in representing non-convex feasible-action distributions. SSM inherits the feasible/infeasible decomposition from this line but replaces the policy class with a diffusion model and the update rule with HJ-gated score matching.

\paragraph{Diffusion / flow policies in RL.}
Diffusion policies were first developed for offline RL, where the policy is fitted to a fixed dataset~\citep{janner2022planning, wang2022diffusion, chi2025diffusion, kang2023efficient, ren2024diffusion, park2025flow}, with extensions for entropy regularization~\citep{wang2024diffusion, dong2025maximum} and flexible target sampling~\citep{ding2024diffusion, uehara2024understanding}. Online RL with diffusion policies is more recent: QSM~\citep{psenka2023learning} links the diffusion-policy score to the action-gradient of $\Qr$ and trains the denoiser on a single modified target; SDAC~\citep{ma2025soft}, FlowRL~\citep{lv2025flow}, SAC Flow~\citep{zhang2026sac}, and FPO~\citep{mcallister2026flow} explore related score- and flow-matching objectives. All are reward-only. SSM extends QSM's score-target machinery to a piecewise reward / recovery target indexed by HJ feasibility.

\paragraph{Safe generative policies.}
Closest to our setting are methods that combine generative policies with safety. SafeDiffuser~\citep{xiao2023safediffuser} performs constrained planning with a diffusion model; \citet{cheng2025safe} use Lyapunov-guided diffusion for stable control; \citet{zhang2025constrained} apply constrained diffusion sampling for safe planning. In the offline regime, FISOR~\citep{zheng2024safe} combines hard safety, HJ-style feasibility, and a guided diffusion policy via feasibility-dependent weighted behavior cloning extracted by a guided diffusion model. SSM studies the online actor--critic counterpart: it derives a denoising score target from the HJ avoid state-action value $\Qh$ and the viable-action set $\{a : \Qh(s, a) \le 0\}$, rather than from a soft cumulative-cost objective. ALGD~\citep{algd2026} trains a diffusion policy online under an expected cumulative-cost constraint; Section~\ref{sec:experiments} compares SSM with it on Safety-Gymnasium.

\section{Q-Score Matching: Background and Endpoint Guidance}
\label{app:qsm-background}

This appendix collects the QSM context underlying SSM's actor update, together with the trade-off between regressing the clean-action energy score and regressing exact intermediate-time scores. The discussion expands the brief inline reference in Section~\ref{sec:method}.

\paragraph{Score-based view of policy improvement.}
The score-based view of policy improvement underlying our method originates from Q-score matching (QSM)~\citep{psenka2023learning}. Treating the policy as a stochastic process whose actions evolve along a score field $\Psi(s, a)$, \citet{psenka2023learning} showed that the optimal $\Psi$ for reward maximization is, up to a positive constant, the action-gradient of the state-action value function: $\Psi^\star(s, a) \propto \nabla_a \Qr(s, a)$. The induced stationary action distribution is the Boltzmann form $\pi(a \mid s) \propto \exp(\alpha_r \Qr(s, a))$, recovering the soft-policy view that motivates entropy-regularized RL~\citep{song2020score}. Direct policy gradients for diffusion actors require backpropagating through the entire multi-step reverse sampling chain, which is computationally expensive and high-variance~\citep{psenka2023learning}; QSM bypasses this by regressing a $t$-independent score-matching target via a denoising loss, differentiating only through the denoiser at the current noise level. SSM inherits this design choice and extends QSM's reward-only target to a piecewise reward / recovery target indexed by HJ feasibility (Section~\ref{sec:method}, Theorems~\ref{thm:gibbs-two-end}--\ref{thm:two-end-score}).

\paragraph{Endpoint guidance versus exact intermediate diffusion score.}
The score in Theorem~\ref{thm:two-end-score} is the score of the clean-action target $\pi^\star(a \mid s)$. The exact score of the forward-smoothed intermediate marginals would instead be
\begin{equation}
\label{eq:exact-marginal-score}
\nabla_{a_t} \log p_t^\star(a_t \mid s),
\qquad
p_t^\star(a_t \mid s) = \int \pi^\star(a_0 \mid s)\, q_t(a_t \mid a_0)\, da_0,
\end{equation}
which is generally intractable in the hard-constrained setting because the indicator $\indicator{\Qh^\star(s, a) \le 0}$ inside $\pi^\star$ in~\eqref{eq:pi-gibbs} makes $p_t^\star$ a Gaussian convolution of a discontinuous density without a closed-form gradient. Following~\citet{psenka2023learning}, SSM regresses a $t$-independent target derived from the clean-action density. Since $-M_q\bar\varphi(s, a_t)$ depends only on $(s, a_t)$, the population $L^2$-minimizer of~\eqref{eq:loss} is $-M_q\bar\varphi$ itself; substituting it into the reverse update~\eqref{eq:ddpm_reverse} adds a positive-coefficient component aligned with~\eqref{eq:interior-score} on the support interior. The relation is therefore guidance-component alignment with the clean-energy score, not exact reverse-chain score matching for $p_t^\star$. The velocity-task implementation of Appendix~\ref{app:safety-gym} instead regresses a Monte Carlo estimate of the posterior-mean noise, averaged with the prediction of a lagged copy of the denoiser.

\paragraph{Connection to Langevin-style sampling.}
\citet{ma2025soft} characterize QSM as a Langevin-based sampler (using the analytic Q-gradient as score, without noise perturbation across denoising levels) rather than a multi-level noisy-score diffusion sampler. SSM inherits this characterization: the reverse update~\eqref{eq:ddpm_reverse} with the substituted denoiser is a diffusion-parameterized Langevin-style sampler on the HJ-gated energy landscape, not an exact multi-level noisy-score sampler. We therefore treat $M_q$ as a guidance-strength hyperparameter rather than assigning a closed-form stationary density to the finite-step chain.

\section{Finite-Candidate HJ Gate}
\label{app:hj}
\label{app:vh-estimator}

This appendix records the formal property of the finite-candidate gate~\eqref{eq:vh-mc} used in Section~\ref{sec:practical-algorithm}.

\paragraph{Candidate construction.}
For state $s$, let $\calC_K(s)$ denote the candidate action set
\begin{equation}
\label{eq:candidate-set}
\calC_K(s) \;=\; \{a_\theta\} \;\cup\; \big\{\Pi_\calA\!\big(a_\theta + \sigma_\eta\, \eta^{(j)}\big)\big\}_{j=1}^{K-1},
\qquad a_\theta \sim \pi_\theta(\cdot \mid s),
\quad \eta^{(j)} \sim \mathcal N(0, I).
\end{equation}
A single diffusion sample $a_\theta$ anchors the candidate set, and $K-1$ local Gaussian-proposal candidates surround it, for $K$ candidates per gate evaluation. The same construction is used for both the target-state gate $\widehat{\bar \Vh}(s_i')$ in the safety bootstrap and the current-state gate $\widehat{\Vh}(s_i)$ in the denoising update; the gates differ only in which safety critic is queried (EMA target $\bar\Qh$ versus online $\Qh$). Coupling all $K-1$ proposals to a single policy sample requires only one reverse-chain forward pass per gate evaluation, while the Gaussian perturbations introduce local action-space coverage at negligible additional cost.

\paragraph{Conservatism of the gate.}
The estimator is upward-biased relative to the continuous-action minimum, which translates into a conservative feasibility certificate at the gate level.

\begin{proposition}[Conservative finite-candidate HJ gate]
\label{prop:conservative-gate}
Fix a critic $\Qh(s, \cdot)$ and let $\Vh(s) = \min_{a \in \calA} \Qh(s, a)$ and $\widehat \Vh(s) = \min_{a \in \calC_K(s)} \Qh(s, a)$ with $\calC_K(s) \subseteq \calA$. Then $\widehat \Vh(s) \ge \Vh(s)$. Consequently, the finite-candidate feasible set $\widehat\calS_f \triangleq \{s : \widehat \Vh(s) \le 0\}$ is an inner approximation of the critic-induced feasible set $\calS_f \triangleq \{s : \Vh(s) \le 0\}$, i.e., $\widehat\calS_f \subseteq \calS_f$.
\end{proposition}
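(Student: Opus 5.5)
The statement is a direct consequence of the fact that a minimum over a subset can only be larger than a minimum over the full set. I would carry it out in two steps.

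\emph{Step 1: the inequality.} Since $\calC_K(s) \subseteq \calA$ (each candidate is either $a_\theta \in \calA$ or a projection $\Pi_\calA(\cdot) \in \calA$, as in~\eqref{eq:candidate-set}), every $a \in \calC_K(s)$ satisfies $\Qh(s, a) \ge \inf_{a' \in \calA} \Qh(s, a') = \Vh(s)$. The finite set $\calC_K(s)$ is nonempty, so its minimum is attained at some $\hat a \in \calC_K(s)$. Hence $\widehat \Vh(s) = \Qh(s, \hat a) \ge \Vh(s)$.

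\emph{Step 2: the set inclusion.} Take $s \in \widehat\calS_f$, so $\widehat \Vh(s) \le 0$. Step 1 gives $\Vh(s) \le \widehat \Vh(s) \le 0$, so $s \in \calS_f$. Therefore $\widehat\calS_f \subseteq \calS_f$.

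The only point that needs care is that $\Vh(s)$ be well defined. The continuous minimum over $\calA$ may not be attained, so I would read $\Vh(s)$ as an infimum; this is consistent with the attainment assumptions of Section~\ref{sec:reachability}. With an infimum, Step 1 goes through unchanged, since any value attained in $\calA$ is bounded below by the infimum, and the lower bound holds whether or not $\Vh(s)$ is finite. The randomness of $a_\theta$ and $\eta^{(j)}$ plays no role, because the inequality holds for every realization of the candidate set. I do not expect any real obstacle.
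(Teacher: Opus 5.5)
Your proposal is correct and follows the paper's own argument: a minimum over the subset $\calC_K(s) \subseteq \calA$ cannot fall below the minimum over $\calA$, and the chain $\Vh(s) \le \widehat\Vh(s) \le 0$ then gives $\widehat\calS_f \subseteq \calS_f$. Reading $\Vh(s)$ as an infimum when the minimum over $\calA$ is not attained is a harmless extra precaution that the paper does not state.
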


\begin{proof}
Minimizing over a subset cannot fall below minimizing over the superset, so $\widehat \Vh(s) \ge \Vh(s)$. If $s \in \widehat\calS_f$, then $\Vh(s) \le \widehat \Vh(s) \le 0$, so $s \in \calS_f$.
\end{proof}

Two caveats are worth stating explicitly. First, the proposition is stated for a fixed critic; in the online learning loop, the bootstrap target itself uses $\widehat \Vh$, so the learned $\Qh$ fits a finite-candidate Bellman fixed point rather than the oracle one, and learned-critic error can break the conservatism in either direction. Second, the proposition certifies state-level feasibility classification, not that every action sampled by the diffusion policy is viable; reward guidance is restricted separately to actions with $\Qh(s, a_t) \le 0$ by the indicator in the score target~\eqref{eq:score_target}. The score-matching framework of Section~\ref{sec:method} does not depend on the specific $\Vh$ estimator: alternatives such as adversarial safety actors~\citep{fisac2019bridging, hsu2021safety} or expectile regression~\citep{kostrikov2022offline, zheng2024safe} can be substituted, and Appendix~\ref{app:expectile} reports the variant we use on Quad2D.

\paragraph{Gate margin under bounded critic error.}
\label{app:gate-margin}
We prove Proposition~\ref{prop:gate-margin}.

\begin{proof}
Let $\hat a \in \calC_K(s)$ attain $\widehat\Vh(s)$ and suppose that $\widehat\Vh(s) \le -\varepsilon_Q$. Since $\calC_K(s) \subseteq \calA$, the error bound gives
\begin{equation*}
V_h^\star(s) \;\le\; \Qh^\star(s, \hat a) \;\le\; \Qh(s, \hat a) + \varepsilon_Q \;=\; \widehat\Vh(s) + \varepsilon_Q \;\le\; 0.
\end{equation*}
The same chain with $\hat a$ replaced by $a$ proves the action statement. With a zero threshold, $\widehat\Vh(s) \le 0$ implies $V_h^\star(s) \le \widehat\Vh(s) + \varepsilon_Q \le \varepsilon_Q$, so an erroneous feasible classification requires $0 < V_h^\star(s) \le \varepsilon_Q$; the action case is identical.
\end{proof}

\section{Proofs for Section~\ref{sec:method}}
\label{app:proofs}

This appendix proves Theorems~\ref{thm:gibbs-two-end} (HJ-Gated Boltzmann Target) and~\ref{thm:two-end-score} (Score of the HJ-Gated Boltzmann Target), together with the complementary zero-temperature concentration result Proposition~\ref{prop:zero-temp}. Theorem~\ref{thm:gibbs-two-end}'s assumptions are minimal: we only require measurability of the critics together with finiteness of the partition functions, which is the standard regularity needed for the Gibbs variational principle~\citep{boyd2004convex, brezis2011functional}. We separate this from the stronger differentiability and continuity hypotheses required in Theorem~\ref{thm:two-end-score} and Proposition~\ref{prop:zero-temp}, which are imposed only locally.

\subsection{Proof of Theorem~\ref{thm:gibbs-two-end}}
\label{app:proof-gibbs}

\paragraph{Entropy-regularized subproblems.}
We make the entropy-regularized analogues of~\eqref{eq:pi-feasible}--\eqref{eq:pi-infeasible} referenced in Section~\ref{sec:gibbs} explicit. Stating these explicitly with critics $(\Qr^\pi, \Qh^\star)$, with differential entropy $\Hent(\pi(\cdot \mid s)) \triangleq -\int \pi(a \mid s) \log \pi(a \mid s)\, da$ and inverse-temperature parameters $\alpha_r, \beta > 0$, the regularized feasible subproblem on $s \in \calS_f^\star$ is
\begin{align}
\max_{\pi(\cdot \mid s)} \quad & \E_{a \sim \pi}\!\big[\alpha_r\, \Qr^\pi(s, a)\big] + \Hent(\pi(\cdot \mid s)) \label{eq:gibbs-feasible} \\
\text{s.t.} \quad & \operatorname{supp}\pi(\cdot \mid s) \subseteq \calA_f^\star(s),
\quad
\textstyle\int \pi(a \mid s)\, da = 1, \nonumber
\end{align}
and the regularized recovery subproblem on $s \notin \calS_f^\star$ is
\begin{equation}
\label{eq:gibbs-infeasible}
\max_{\pi(\cdot \mid s)} \;\; \E_{a \sim \pi}\!\big[-\beta\, \Qh^\star(s, a)\big] + \Hent(\pi(\cdot \mid s)),
\qquad
\textstyle\int \pi(a \mid s)\, da = 1.
\end{equation}
The full assumptions of Theorem~\ref{thm:gibbs-two-end} are: $\calA \subset \R^d$ compact, $\Qr^\pi(s,\cdot)$ and $\Qh^\star(s,\cdot)$ Lebesgue-measurable on $\calA$, $\calA_f^\star(s)$ Lebesgue-measurable with $\lambda(\calA_f^\star(s)) > 0$ whenever $s \in \calS_f^\star$, and the partition functions
\begin{equation*}
Z_f(s) = \int_{\calA_f^\star(s)} \exp(\alpha_r\, \Qr^\pi(s, a))\, da,
\qquad
Z_{\mathrm{rec}}(s) = \int_\calA \exp(-\beta\, \Qh^\star(s, a))\, da
\end{equation*}
are finite. The maximization is over probability densities absolutely continuous with respect to Lebesgue measure, supported on $\calA_f^\star(s)$ (resp.\ $\calA$); uniqueness is up to Lebesgue-a.e.\ equivalence.

\paragraph{Donsker--Varadhan derivation.}
We give a KL-divergence derivation~\citep{boyd2004convex} of~\eqref{eq:pi-gibbs}, which makes the role of partition-function finiteness transparent and avoids differentiability of the critics. Throughout, $\pi(\cdot \mid s)$ denotes a candidate density with respect to Lebesgue measure on $\calA$, all integrals are over $\calA$ unless noted, and the state $s$ is held fixed.

\paragraph{Case 1: feasible state $s \in \calS_f^\star$.}
The feasible entropy-regularized subproblem~\eqref{eq:gibbs-feasible} reads, after expanding $\Hent$,
\begin{equation}
\label{eq:proof-feasible-primal}
\max_{\pi(\cdot \mid s)} \;\;
\int_{\calA_f^\star(s)} \pi(a \mid s)\, \alpha_r\, \Qr^\pi(s, a)\, da
\;-\; \int_{\calA_f^\star(s)} \pi(a \mid s)\, \log \pi(a \mid s)\, da
\end{equation}
over probability densities supported on $\calA_f^\star(s)$ and absolutely continuous with respect to Lebesgue measure. Define the candidate Boltzmann density
\begin{equation}
\label{eq:proof-feasible-candidate}
\pi^\star(a \mid s) \;\triangleq\; \frac{\exp\!\big(\alpha_r\, \Qr^\pi(s, a)\big)\, \indicator{a \in \calA_f^\star(s)}}{Z_f(s)},
\qquad
Z_f(s) \;\triangleq\; \int_{\calA_f^\star(s)} \exp(\alpha_r\, \Qr^\pi(s, a))\, da.
\end{equation}
Under the assumed measurability of $\calA_f^\star(s)$ and $\Qr^\pi(s, \cdot)$ together with $0 < \lambda(\calA_f^\star(s))$ (positive Lebesgue measure) and $Z_f(s) < \infty$, the candidate $\pi^\star$ is a well-defined probability density.

For any density $\pi$ supported on $\calA_f^\star(s)$, take logarithms in~\eqref{eq:proof-feasible-candidate} to obtain the pointwise identity $\alpha_r\, \Qr^\pi(s, a) = \log \pi^\star(a \mid s) + \log Z_f(s)$ on $\calA_f^\star(s)$. Substituting into the objective in~\eqref{eq:proof-feasible-primal},
\begin{align}
&\int \pi(a \mid s)\, \alpha_r\, \Qr^\pi(s, a)\, da \;-\; \int \pi(a \mid s)\, \log \pi(a \mid s)\, da \notag \\
&\qquad= \int \pi(a \mid s)\, \log\!\frac{\pi^\star(a \mid s)}{\pi(a \mid s)}\, da \;+\; \log Z_f(s) \notag \\
&\qquad= -\,\KL\!\big(\pi(\cdot \mid s) \,\big\|\, \pi^\star(\cdot \mid s)\big) \;+\; \log Z_f(s).
\label{eq:proof-feasible-kl}
\end{align}
The Donsker--Varadhan / Gibbs variational identity~\citep{boyd2004convex, brezis2011functional} states that $\KL(\pi \| \pi^\star) \geq 0$ with equality if and only if $\pi = \pi^\star$ a.e.\ on $\calA_f^\star(s)$. Hence the objective in~\eqref{eq:proof-feasible-primal} is uniformly upper-bounded by $\log Z_f(s)$, with the unique maximizer being $\pi^\star$ in~\eqref{eq:proof-feasible-candidate}. This is the feasible branch of~\eqref{eq:pi-gibbs}.

\paragraph{Case 2: infeasible state $s \notin \calS_f^\star$.}
The infeasible entropy-regularized subproblem~\eqref{eq:gibbs-infeasible} reads, after expanding $\Hent$,
\begin{equation}
\label{eq:proof-infeasible-primal}
\max_{\pi(\cdot \mid s)} \;\;
\int_\calA \pi(a \mid s)\, \big(-\beta\, \Qh^\star(s, a)\big)\, da
\;-\; \int_\calA \pi(a \mid s)\, \log \pi(a \mid s)\, da
\qquad \text{s.t.} \quad
\int_\calA \pi(a \mid s)\, da = 1,
\end{equation}
without a support constraint. Define the candidate Boltzmann density
\begin{equation}
\label{eq:proof-infeasible-candidate}
\pi^\star(a \mid s) \;\triangleq\; \frac{\exp\!\big(-\beta\, \Qh^\star(s, a)\big)}{Z_{\mathrm{rec}}(s)},
\qquad
Z_{\mathrm{rec}}(s) \;\triangleq\; \int_\calA \exp(-\beta\, \Qh^\star(s, a))\, da,
\end{equation}
which is a well-defined probability density under the assumed measurability of $\Qh^\star(s, \cdot)$ on $\calA$ and $Z_{\mathrm{rec}}(s) < \infty$. The same KL identity applied with $\pi^\star$ in place of the feasible candidate yields
\begin{equation*}
\int \pi (-\beta\, \Qh^\star)\, da - \int \pi \log \pi\, da \;=\; -\KL(\pi \,\|\, \pi^\star) + \log Z_{\mathrm{rec}}(s),
\end{equation*}
and Donsker--Varadhan again gives uniqueness: the unique maximizer is~\eqref{eq:proof-infeasible-candidate}, the infeasible-state branch of~\eqref{eq:pi-gibbs}.
\hfill$\square$

\subsection{Proof of Theorem~\ref{thm:two-end-score}}
\label{app:proof-two-end}

\begin{proof}[Proof of Theorem~\ref{thm:two-end-score} (Score of the HJ-Gated Boltzmann Target)]
Take logarithms of the closed-form $\pi^\star$ from Theorem~\ref{thm:gibbs-two-end} (proved in Appendix~\ref{app:proof-gibbs}). For $s \in \calS_f^\star$ and $a \in \mathrm{int}(\calA_f^\star(s))$,
\begin{equation*}
\log \pi^\star(a \mid s) \;=\; \alpha_r\, \Qr^\pi(s, a) \;-\; \log Z_f(s),
\end{equation*}
with $\log Z_f(s)$ independent of $a$. Under the local differentiability assumption that $\Qr^\pi(s, \cdot)$ is differentiable on $\mathrm{int}(\calA_f^\star(s))$, taking the gradient with respect to $a$ yields
\begin{equation*}
\nabla_a \log \pi^\star(a \mid s) \;=\; \alpha_r\, \nabla_a \Qr^\pi(s, a), \qquad s \in \calS_f^\star,\; a \in \mathrm{int}(\calA_f^\star(s)).
\end{equation*}
For $s \notin \calS_f^\star$ and $a \in \mathrm{int}(\calA)$, $\log \pi^\star(a \mid s) = -\beta\, \Qh^\star(s, a) - \log Z_{\mathrm{rec}}(s)$, so under differentiability of $\Qh^\star(s, \cdot)$ on $\mathrm{int}(\calA)$,
\begin{equation*}
\nabla_a \log \pi^\star(a \mid s) \;=\; -\beta\, \nabla_a \Qh^\star(s, a).
\end{equation*}
The condition $\Qh^\star(s, a) < 0$ used in the theorem statement is sufficient for $a \in \mathrm{int}(\calA_f^\star(s))$, since the strict sublevel set $\{\Qh^\star(s, \cdot) < 0\}$ is open whenever $\Qh^\star(s, \cdot)$ is continuous; we adopt this form in the main text because it is directly checkable from the routing gate $\Iind$.
\end{proof}

\subsubsection*{Zero-Temperature Concentration (sanity check)}

For completeness, we record the zero-temperature behaviour of the target policy $\pi^\star$ in~\eqref{eq:pi-gibbs}, summarised in Section~\ref{sec:gibbs} of the main text. This complements Theorem~\ref{thm:two-end-score} by showing that the entropy regularization is a continuous relaxation of the unregularized greedy subproblems on $(\Qr^\pi, \Qh^\star)$.

\begin{proposition}[Zero-temperature concentration]
\label{prop:zero-temp}
Suppose, in addition to the assumptions of Theorem~\ref{thm:gibbs-two-end}, that $\Qr^\pi(s, \cdot)$ and $\Qh^\star(s, \cdot)$ are continuous on $\calA$, and that every open neighbourhood of $\argmax_{a \in \calA_f^\star(s)} \Qr^\pi(s, a)$ (resp.\ $\argmin_{a \in \calA} \Qh^\star(s, a)$) has positive Lebesgue measure within the corresponding support. As $\alpha_r \to \infty$, every weak limit point of $\pi^\star(\cdot \mid s)$ for $s \in \calS_f^\star$ is supported on $\argmax_{a \in \calA_f^\star(s)} \Qr^\pi(s, a)$; symmetrically, as $\beta \to \infty$, $\pi^\star$ for $s \notin \calS_f^\star$ concentrates on $\argmin_{a \in \calA} \Qh^\star(s, a)$. These limits recover the unregularized maximizers of~\eqref{eq:pi-feasible}--\eqref{eq:pi-infeasible}.
\end{proposition}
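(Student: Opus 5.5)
The plan is a Laplace-principle argument applied to each branch separately, with the closed forms of Theorem~\ref{thm:gibbs-two-end} as the starting point. Fix $s \in \calS_f^\star$ and write $D \triangleq \calA_f^\star(s)$, $f \triangleq \Qr^\pi(s,\cdot)$, and $M \triangleq \sup_{a \in D} f(a)$. We will show that for every open neighbourhood $U$ of $A^\star \triangleq \argmax_{a \in D} f(a)$, the mass $\pi^\star_{\alpha_r}(D \setminus U \mid s)$ tends to $0$ as $\alpha_r \to \infty$. This tail-mass statement then transfers to weak limit points by the portmanteau theorem. The probability measures live on the compact set $\calA$, so limit points exist by Prokhorov. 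If $\mu$ is a weak limit, then $\mu(\calA \setminus \overline{U}) \le \liminf \pi^\star(\calA \setminus \overline{U}) = 0$. Intersecting over a shrinking family of neighbourhoods, and using $\mu(\calA\setminus \overline{D})=0$ by the same portmanteau bound, gives $\operatorname{supp}\mu \subseteq \overline{A^\star}$.

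\textbf{Step 1 (gap away from the argmax).} We first need a uniform gap $\delta>0$ with $f \le M - 2\delta$ on $D \setminus U$. This is the step I expect to be the main obstacle. Continuity of $f$ on compact $\calA$ gives it directly on $\overline{D}\setminus U$, provided the supremum over $\overline D$ is still attained inside $U$. For that I will argue that $D$ is closed. Indeed, $D$ is a sublevel set of the continuous function $\Qh^\star(s,\cdot)$, hence closed and therefore compact. So $M$ is attained, $A^\star$ is nonempty and compact, and $D \setminus U$ is compact. The continuous function $f$ attains its maximum on $D \setminus U$, and that maximum is strictly below $M$, which yields $\delta$.

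\textbf{Step 2 (mass near the argmax).} By continuity, the set $V \triangleq \{a \in D : f(a) > M - \delta\}$ is relatively open in $D$ and contains $A^\star$. The hypothesis that neighbourhoods of the argmax have positive Lebesgue measure within $D$ gives $\lambda(V) > 0$. This yields the lower bound
$$Z_f \ge \lambda(V)\, e^{\alpha_r (M-\delta)},$$
while
$$\int_{D\setminus U} e^{\alpha_r f} \le \lambda(\calA)\, e^{\alpha_r (M - 2\delta)}.$$
Dividing gives
$$\pi^\star(D \setminus U \mid s) \le \frac{\lambda(\calA)}{\lambda(V)}\, e^{-\alpha_r \delta} \to 0.$$

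\textbf{Step 3 (recovery branch and unregularized problems).} The recovery branch for $s \notin \calS_f^\star$ is identical, with $D = \calA$ and $f = -\Qh^\star(s,\cdot)$ in place of the reward critic, and $\beta$ in place of $\alpha_r$. Its argmax set is $\argmin_{a\in\calA} \Qh^\star(s,a)$, so no closedness issue arises. Finally, for the claim about recovering the unregularized problems: any probability measure supported on $\argmax_D \Qr^\pi$ attains $\E_\mu[\Qr^\pi] = M$. That value is the supremum of~\eqref{eq:pi-feasible} over measures supported in $\calA_f^\star(s)$, so every weak limit is a maximizer of~\eqref{eq:pi-feasible}. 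The same reasoning shows every recovery-branch limit maximizes~\eqref{eq:pi-infeasible}.
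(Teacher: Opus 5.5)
Your proof is correct and uses the same Laplace-type argument as the paper. You bound the Boltzmann mass of the suboptimal region by comparing it with the partition function restricted to a near-optimal superlevel set, and the exponent gap together with the positive-measure hypothesis gives decay like $e^{-\alpha_r \delta}$. The paper indexes the bound by the suboptimality level $\eta$, whereas you index it by open neighbourhoods $U$. Your compactness step (closedness of $\calA_f^\star(s)$ from continuity of $\Qh^\star(s,\cdot)$, giving a uniform gap on $D \setminus U$) and your portmanteau argument spell out the passage to weak limit points, which the paper only asserts.
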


\begin{proof}
Fix $s \in \calS_f^\star$ and let $Q^\star = \max_{a \in \calA_f^\star(s)} \Qr^\pi(s, a)$, which is attained by continuity of $\Qr^\pi(s, \cdot)$ on the compact set $\calA_f^\star(s)$. Fix $\eta > 0$, and define the two superlevel sets
\begin{equation*}
A_\eta \;\triangleq\; \{a \in \calA_f^\star(s) : \Qr^\pi(s, a) \geq Q^\star - \eta\},
\qquad
A_{\eta / 2} \;\triangleq\; \{a \in \calA_f^\star(s) : \Qr^\pi(s, a) \geq Q^\star - \eta / 2\}.
\end{equation*}
Both are closed by continuity of $\Qr^\pi(s, \cdot)$, and $A_{\eta / 2} \subseteq A_\eta \subseteq \calA_f^\star(s)$. Under the Boltzmann density $\pi^\star(\cdot \mid s)$ from~\eqref{eq:proof-feasible-candidate}, the probability of sampling an $\eta$-suboptimal action is
\begin{equation*}
\Pr_{a \sim \pi^\star(\cdot \mid s)}\!\big(\Qr^\pi(s, a) < Q^\star - \eta\big)
\;=\; \frac{\int_{\calA_f^\star(s) \setminus A_\eta} \exp(\alpha_r\, \Qr^\pi(s, a))\, da}{\int_{\calA_f^\star(s)} \exp(\alpha_r\, \Qr^\pi(s, a))\, da}.
\end{equation*}
To obtain a non-trivial rate as $\alpha_r \to \infty$, we upper-bound the numerator on $\calA_f^\star(s) \setminus A_\eta$ using the sublevel constraint $\Qr^\pi(s, a) < Q^\star - \eta$, and lower-bound the denominator on the smaller set $A_{\eta / 2} \subseteq \calA_f^\star(s)$ using the superlevel constraint $\Qr^\pi(s, a) \geq Q^\star - \eta / 2$. This creates a positive gap of $\eta / 2$ in the exponent:
\begin{align*}
\Pr_{a \sim \pi^\star(\cdot \mid s)}\!\big(\Qr^\pi(s, a) < Q^\star - \eta\big)
&\leq \frac{\lambda(\calA_f^\star(s)) \cdot \exp(\alpha_r (Q^\star - \eta))}{\lambda(A_{\eta / 2}) \cdot \exp(\alpha_r (Q^\star - \eta / 2))} \\
&= \frac{\lambda(\calA_f^\star(s))}{\lambda(A_{\eta / 2})} \cdot \exp(-\alpha_r\, \eta / 2).
\end{align*}
Under the regularity assumption that every open neighbourhood of $\argmax_{a \in \calA_f^\star(s)} \Qr^\pi(s, a)$ has positive Lebesgue measure within $\calA_f^\star(s)$, continuity of $\Qr^\pi(s, \cdot)$ implies that $A_{\eta / 2}$ contains an open neighbourhood of the argmax set, so $\lambda(A_{\eta / 2}) > 0$. The ratio on the right therefore tends to $0$ as $\alpha_r \to \infty$. Since this holds for every $\eta > 0$, $\pi^\star(U \mid s) \to 1$ for every open neighbourhood $U$ of $\argmax_{a \in \calA_f^\star(s)} \Qr^\pi(s, a)$; equivalently, every weak limit point of $\pi^\star(\cdot \mid s)$ is supported on this argmax set. Thus, in the zero-temperature limit $\alpha_r \to \infty$, $\pi^\star(\cdot \mid s)$ concentrates on the unregularized maximizers of~\eqref{eq:pi-feasible}. The infeasible-state branch for $s \notin \calS_f^\star$ is analogous, with $-\Qh^\star$ in place of $\Qr^\pi$, $\calA$ in place of $\calA_f^\star(s)$, and $\beta$ in place of $\alpha_r$; under continuity of $\Qh^\star(s, \cdot)$ on $\calA$ together with the analogous regularity at $\argmin_{a \in \calA} \Qh^\star(s, a)$, $\pi^\star(\cdot \mid s)$ concentrates on this argmin set as $\beta \to \infty$, recovering the greedy HJ-based safety optimizer of~\eqref{eq:pi-infeasible}.
\end{proof}

\subsection{Denoising Regression Identity and DDPM Guidance Coefficient}
\label{app:proof-regression}

This appendix establishes the denoising-regression identity referenced in Section~\ref{sec:score} and computes the DDPM guidance coefficient.

\paragraph{Population $L^2$ optimum.}
The loss~\eqref{eq:loss} is a square regression of $\epsilon_\theta(s, a_t, t)$ on $-M_q\, \bar\varphi(s, a_t)$ under the joint distribution of $(s, a_t, t)$ induced by $(s, a_0) \sim \calB$, $t \sim \mathrm{Unif}\{1, \ldots, T\}$, $\epsilon \sim \mathcal{N}(0, I)$, and $a_t = \sqrt{\bar\alpha_t}\, a_0 + \sigma(t)\, \epsilon$. The population $L^2$-optimum over measurable functions is the conditional expectation
\begin{equation*}
\epsilon_\theta^\star(s, a_t, t) = \E\!\big[\,{-}M_q\, \bar\varphi(s, a_t) \,\big|\, s, a_t, t\,\big].
\end{equation*}
Since $-M_q\, \bar\varphi(s, a_t)$ is a measurable function of $(s, a_t)$ alone (it does not depend on the latent $(a_0, \epsilon)$ that produced $a_t$, nor on $t$), the conditional expectation reduces to the quantity itself: $\epsilon_\theta^\star(s, a_t, t) = -M_q\, \bar\varphi(s, a_t)$, independent of $t$.

\paragraph{Guidance-component alignment.}
Substituting $\epsilon_\theta = \epsilon_\theta^\star = -M_q\, \bar\varphi$ into the DDPM reverse update~\eqref{eq:ddpm_reverse} gives
\begin{equation*}
a_{t - 1} = \Pi_\calA\!\!\left(\frac{1}{\sqrt{\alpha_t}}\!\left(a_t + \frac{\beta_t}{\sqrt{1 - \bar\alpha_t}}\, M_q\, \bar\varphi(s, a_t)\right) + \tilde\sigma_t\, z\right),
\qquad z \sim \mathcal{N}(0, I).
\end{equation*}
Collecting the $\bar\varphi$-dependent contribution, the reverse update adds to $a_t$ a guidance term
\begin{equation}
\label{eq:guidance-coef}
\Delta_{\mathrm{guide}}(s, a_t, t) = \frac{M_q\, \beta_t}{\sqrt{\alpha_t}\, \sqrt{1 - \bar\alpha_t}}\, \bar\varphi(s, a_t),
\end{equation}
with a positive, time-dependent coefficient $M_q\, \beta_t / (\sqrt{\alpha_t}\, \sqrt{1 - \bar\alpha_t}) > 0$. The remaining terms---the base rescaling $(1 / \sqrt{\alpha_t} - 1)\, a_t$, the Gaussian noise $\tilde\sigma_t\, z$, and the projection $\Pi_\calA$---do not depend on $\bar\varphi$. The decomposition~\eqref{eq:guidance-coef} is exact \emph{before projection}, and after projection only on the region where $\Pi_\calA$ is locally inactive; at action-boundary points where $\Pi_\calA$ is active the guidance contribution can be attenuated or eliminated. We therefore obtain guidance-component alignment with the clean-energy score, not an exact correspondence between the finite-step reverse chain and the score field of $\pi^\star$. On the interior of $\operatorname{supp}\pi^\star$ and away from action boundaries, Theorem~\ref{thm:two-end-score} gives $\bar\varphi = \nabla_a \log \pi^\star$, so $\Delta_{\mathrm{guide}}$ is proportional to the clean-energy score of the target policy $\pi^\star$ through the time-dependent coefficient in~\eqref{eq:guidance-coef}.
\hfill$\square$

\section{Implementation and Pseudo Code}
\label{app:impl-algo}
\label{app:algorithm} %

This appendix collects the algorithmic details deferred from
Section~\ref{sec:practical-algorithm}: the full SSM training loop, the
reward critic and denoiser updates, and an expectile-smoothed
alternative to the finite-candidate $\Vh$ estimator.

\paragraph{Pseudo code.}
Algorithm~\ref{alg:ssm-full} gives the full online SSM training loop
corresponding to Section~\ref{sec:practical-algorithm}. The algorithm
is written at the level of method-defining operations: replay
collection, finite-candidate HJ minimization, reward and safety
critic updates, and the HJ-gated denoising regression.

\begin{algorithm}[H]
\caption{Safe Score Matching (SSM)}
\label{alg:ssm-full}
\begin{algorithmic}[1]
\REQUIRE Replay buffer $\calB$; DDPM steps $T$ and noise schedule $\{\alpha_t,\bar\alpha_t,\beta_t\}_{t=1}^T$; safety discount $\gamma_h$; reward discount $\gamma_r$; score strength $M_q$; reward inverse temperature $\alpha_r$; recovery inverse temperature $\beta$; total candidates $K$; proposal scale $\sigma_\eta$.
\STATE Initialize denoiser $\epsilon_\theta$, safety critic $Q_{h,\psi}$, reward critics $Q_{r,1},Q_{r,2}$, and EMA target networks.
\STATE Collect random transitions and store $(s,a,r,h(s),s')$ in $\calB$.
\FOR{each environment step}
    \STATE Sample action $a_0 \sim \pi_\theta(\cdot \mid s)$ by the DDPM reverse update~\eqref{eq:ddpm_reverse}; execute $a_0$ and add the new transition to $\calB$.
    \FOR{each gradient update}
        \STATE Sample a minibatch $\{(s_i,a_i,r_i,h_i,s_i')\}_{i=1}^B$ from $\calB$.
        \STATE For each $s_i'$, draw a single policy sample $a_\theta^{(i)} \sim \pi_\theta(\cdot \mid s_i')$ and form the candidate set
        \[
        \calC_K(s_i') \;=\; \{a_\theta^{(i)}\} \;\cup\; \{\Pi_\calA(a_\theta^{(i)} + \sigma_\eta\, \eta^{(j)})\}_{j=1}^{K-1},
        \qquad \eta^{(j)} \sim \mathcal N(0, I).
        \]
        \STATE Estimate the target-state HJ value by finite-candidate minimization with the EMA target safety critic $\bar\Qh$:
        \[
        \widehat{\bar \Vh}(s_i')=\min_{a'\in\calC_K(s_i')}\bar\Qh(s_i',a').
        \]
        \STATE Update $\Qh$ toward
        \[
        y_{h,i}=(1-\gamma_h)h_i+\gamma_h\max\{h_i,\widehat{\bar \Vh}(s_i')\}.
        \]
        \STATE Sample $a_i' \sim \pi_\theta(\cdot \mid s_i')$ and update $Q_{r,1},Q_{r,2}$ toward
        \[
        y_{r,i}=r_i+\gamma_r\min_{j\in\{1,2\}}\bar Q_{r,j}(s_i',a_i').
        \]
        \STATE Sample diffusion index $t_i\sim\mathrm{Unif}\{1,\ldots,T\}$ and noise $\epsilon_i\sim\mathcal{N}(0,I)$; construct
        \[
        a_{t_{i}}=\sqrt{\bar\alpha_{t_{i}}}\,a_i+\sqrt{1-\bar\alpha_{t_{i}}}\,\epsilon_i.
        \]
        \STATE Compute the current-state gate with the online safety critic $\Qh$:
        \[
        \widehat{\Vh}(s_i)=\min_{a\in\calC_K(s_i)}\Qh(s_i,a),
        \]
        \[
        \Isafe^{(i)}=\indicator{\widehat{\Vh}(s_i)\leq 0},
        \qquad
        \Iind^{(i)}=\indicator{\Qh(s_i,a_{t_{i}})\leq0}.
        \]
        \STATE Compute the HJ-gated score target
        \[
        \bar\varphi_i
        =
        \Isafe^{(i)}\Iind^{(i)}\,\alpha_r\nabla_{a_{t_{i}}}\min_j Q_{r,j}(s_i,a_{t_{i}})
        +(1-\Isafe^{(i)})(-\beta\nabla_{a_{t_{i}}}\Qh(s_i,a_{t_{i}})).
        \]
        \STATE Update the denoiser by
        \[
        \nabla_\theta
        \frac{1}{B}
        \sum_{i=1}^B
        \left\|
        \epsilon_\theta(s_i,a_{t_{i}},t_i)
        -
        \operatorname{sg}[-M_q\bar\varphi_i]
        \right\|^2.
        \]
        \STATE Update all EMA target networks.
    \ENDFOR
\ENDFOR
\end{algorithmic}
\end{algorithm}

\paragraph{Reward critic and denoiser.}
$\Qr$ is trained by clipped double-$Q$
regression~\citep{fujimoto2018addressing} with bootstrap actions
sampled from the current diffusion policy. Given learned
$\Vh, \Qh, \Qr$, the score target $\bar\varphi$
in~\eqref{eq:score_target} is computed by a forward pass through the
critics and one autograd call for the action gradient, and the
denoiser is trained by minimizing~\eqref{eq:loss}. The reverse
chain~\eqref{eq:ddpm_reverse} produces both environment actions and
$\Qr$ bootstrap actions, so SSM operates in an off-policy
actor--critic regime.

\paragraph{Guidance normalization and the non-viable set.}
On the control tasks, the implementation normalizes the reward gradient separately for each sample, $\alpha_r \gqr / (\|\gqr\| + c_0)$ with $c_0 = 10^{-8}$, and uses the raw recovery gradient $-\beta \gqh$. When the state is classified as feasible but the queried action is not, where~\eqref{eq:score_target} sets $\bar\varphi = 0$, it uses $-\alpha_r \gqh / (\|\gqh\| + c_0)$, which guides the action toward lower predicted violation. The normalization rescales the reward branch without changing its direction, and the second term acts only where the score of $\pi^\star$ is undefined. The implemented target therefore agrees in direction with Theorem~\ref{thm:two-end-score} wherever that score is defined.

\paragraph{Expectile-smoothed $\Vh$ estimator.}
\label{app:expectile}
A complementary surrogate for $\Vh(s) = \min_{a \in \calA} \Qh(s, a)$,
used by FISOR~\citep{zheng2024safe} and motivated by expectile
regression~\citep{kostrikov2022offline, newey1987asymmetric}, fits a
separate state-value network $\Vh^\tau$ to a smoothed lower expectile
of $\Qh(s, \cdot)$ over the replay-buffer action distribution. For
$\tau \in (0, \tfrac{1}{2})$, the reversed expectile loss is
\begin{equation}
\label{eq:expectile-loss}
L_\tau^{\mathrm{rev}}(u) \;=\; \big|\tau - \indicator{u < 0}\big| \cdot u^2,
\end{equation}
which down-weights residuals above zero and up-weights those below.
The expectile state-value network is fit by
\begin{equation}
\label{eq:expectile-vh}
\calL_{\Vh^\tau}
\;=\;
\E_{(s, a) \sim \calB}\!\Big[
\, L_\tau^{\mathrm{rev}}\!\big(\Qh(s, a) - \Vh^\tau(s)\big) \,
\Big],
\end{equation}
and $\Vh^\tau$ is substituted for $\widehat \Vh$ in the feasibility
gate of~\eqref{eq:score_target}. We use $\tau = 0.1$ on the Quad2D
trajectory-tracking benchmark, whose reference repeatedly approaches
the safety boundary; all other tasks use the finite-candidate
estimator~\eqref{eq:vh-mc}. The score-matching framework is agnostic
to the choice of estimator: any consistent surrogate for
$\min_{a} \Qh(s, a)$ is admissible.

\section{More Experimental Results}
\label{app:experiments}

\subsection{Network Architecture}
\label{app:architecture}

The safety critic $\Qh$ and the two reward critics $Q_{r,1}, Q_{r,2}$ are
2-hidden-layer MLPs of width $(512, 512)$ on the control tasks; Table~\ref{tab:velocity-hparams} lists the velocity-task networks.
The denoiser $\epsilon_\theta$ uses Mish activations, a learned Fourier
time embedding (dim $= 64$), and a separate observation encoder MLP
$(128, 128, \text{Mish})$; its hidden widths are task-dependent, with
$(512, 512)$ for Quad2D and $(512, 512, 512)$ for the higher-dimensional
Quad3D and F16 tasks (Table~\ref{tab:hyperparams-task}). The state value $\Vh$ is obtained from $\Qh$ by the
finite-candidate estimator~\eqref{eq:vh-mc}, except on Quad2D, which
trains the expectile value network of Appendix~\ref{app:expectile}.

\subsection{Hyperparameters}
\label{app:hyperparams}

Table~\ref{tab:hyperparams-shared} lists the SSM hyperparameters that
are shared across all three benchmarks; Table~\ref{tab:hyperparams-task}
lists the task-specific training settings. The corresponding system
dynamics, reward functions, and safety functions are specified in
Appendix~\ref{app:dynamics}.

\begin{table}[h]
\centering
\small
\caption{Shared SSM hyperparameters used on all three benchmarks
(Quad2D, Quad3D, F16).}
\label{tab:hyperparams-shared}
\begin{tabular}{lcc}
\toprule
Hyperparameter & Symbol & Value \\
\midrule
Diffusion denoising steps      & $T$            & $5$ \\
Diffusion noise schedule       & ---            & VP \\
DDPM evaluation temperature    & ---            & $0.2$ \\
Score-field strength           & $M_q$          & $10$ \\
Reward-gradient coefficient    & $\alpha_r$     & $1$ \\
Recovery coefficient           & $\beta$        & $5$ \\
Safety margin                  & $\delta$       & $0$ \\
Safety discount                & $\gamma_h$     & $0.999$ \\
EMA target rate                & $\tau$         & $0.005$ \\
Learning rate                  & ---            & $3 \times 10^{-4}$ (cosine decay) \\
Batch size                     & $B$            & $512$ \\
Updates per environment step   & $G$            & $1$ \\
Local proposal count           & $K$            & $8$ \\
Local proposal scale           & $\sigma_\eta$  & $0.3$ \\
\bottomrule
\end{tabular}
\end{table}

\begin{table}[h]
\centering
\small
\caption{Task-specific settings. The denoiser actor uses three hidden
layers on the higher-dimensional Quad3D and F16 tasks; the critic
architecture is shared across tasks (Appendix~\ref{app:architecture}).}
\label{tab:hyperparams-task}
\begin{tabular}{lccccc}
\toprule
Task & Actor hidden & Training steps & Warmup & Episode horizon & $\gamma_r$ \\
\midrule
Quad2D tracking         & $(512, 512)$       & $2 \times 10^6$  & $5{,}000$   & $360$  & $0.99$  \\
Quad3D stabilization    & $(512, 512, 512)$  & $1 \times 10^6$  & $5{,}000$   & $500$  & $0.99$  \\
F16 stabilization       & $(512, 512, 512)$  & $2 \times 10^6$  & $20{,}000$  & $640$  & $0.995$ \\
\bottomrule
\end{tabular}
\end{table}

\subsection{System Dynamics and Safety Functions}
\label{app:dynamics}

This appendix specifies the dynamics, reward, and safety function
$h(s)$ for each of the three benchmarks. All three are deterministic
continuous-time systems that we discretize at a fixed step size; the
integrator and step size are listed per task.

\paragraph{Quad2D trajectory tracking.}
The state is
$s = (x, v_x, z, v_z, \theta, \omega) \in \R^6$ and the action is
$a \in [-1, 1]^2$, mapped to two thrust commands $u_i = (a_i + 1)/2$
with rotor forces $F_i = m g u_i$. The continuous-time dynamics are
\begin{align}
\label{eq:quad2d-ode}
\dot x      &= v_x,
&
\dot z      &= v_z,
&
\dot \theta &= \omega, \notag \\
\dot v_x    &= -\frac{(F_1 + F_2)\sin\theta}{m},
&
\dot v_z    &= \frac{(F_1 + F_2)\cos\theta}{m} - g,
&
\dot \omega &= \frac{0.1\,(u_2 - u_1)}{I},
\end{align}
with $m = 1$, $I = 0.02$, $g = 9.81$, and step size $\Delta t = 1/60$;
we integrate with semi-implicit Euler. The reward tracks a circular
reference and penalizes state error and control effort,
\begin{equation}
r(s, a) \;=\; -(s - s_{\text{ref}})^\top Q (s - s_{\text{ref}}) \;-\; 10^{-3} \|a\|_2^2,
\qquad
Q = \mathrm{diag}(10,\, 1,\, 10,\, 1,\, 0.2,\, 0.2),
\end{equation}
and the safety function combines a tight altitude band with loose
position bounds,
\begin{equation}
h(s) \;=\; \max\!\big(0.5 - z,\;\; z - 1.5,\;\; |x| - 2,\;\; |z| - 3\big).
\end{equation}

\paragraph{Quad3D regulation.}
We use the standard nine-state quadrotor model. The state is
$s = (p_x, p_y, p_z, v_x, v_y, v_z, \phi, \theta, \psi) \in \R^9$ with
$p_z$ measured positive downward, and the action $a \in [-1, 1]^4$ is
mapped to a thrust command and three angular-rate commands,
\begin{equation}
\text{thrust} \;=\; m g\, (1 + a_0),
\qquad
(\dot\phi_{\text{cmd}},\; \dot\theta_{\text{cmd}},\; \dot\psi_{\text{cmd}}) \;=\; 5\,(a_1, a_2, a_3).
\end{equation}
The dynamics are
\begin{align}
\label{eq:quad3d-ode}
\dot p_x &= v_x,
&
\dot p_y &= v_y,
&
\dot p_z &= v_z, \notag \\
\dot v_x &= -\frac{\text{thrust}\,\sin\theta}{m},
&
\dot v_y &= \frac{\text{thrust}\,\cos\theta\,\sin\phi}{m},
&
\dot v_z &= g - \frac{\text{thrust}\,\cos\theta\,\cos\phi}{m}, \notag \\
\dot \phi &= \dot\phi_{\text{cmd}},
&
\dot \theta &= \dot\theta_{\text{cmd}},
&
\dot \psi &= \dot\psi_{\text{cmd}},
\end{align}
with $m = 1$, $g = 9.80665$, and step size $\Delta t = 0.01$; we
integrate with forward Euler. The reward is a weighted $\ell_1$
regulation toward the origin (target $p_z = 0$) with weights
$q_{\text{pos}} = 15$, $q_{\text{vel}} = 6$, $q_{\text{ang}} = 0.5$,
$q_\psi = 2$, and no action penalty. The safety function imposes a
ground constraint and a bounded workspace,
\begin{equation}
h(s) \;=\; \max\!\big(p_z,\;\; \|p\|_2 - 3\big).
\end{equation}

\paragraph{F16 stabilize-and-avoid.}
We adopt the F16 stabilize-and-avoid setup of~\citet{so2023solving},
based on the standard 16-state F16 aircraft model. The state is
\begin{equation*}
x \;=\; (V_T,\, \alpha,\, \beta,\, \phi,\, \theta,\, \psi,\, p,\, q,\, r,\, p_N,\, p_E,\, H,\, \mathrm{pow},\, n_{z\mathrm{int}},\, p_{s\mathrm{int}},\, n_{y_r\mathrm{int}}),
\end{equation*}
and the four-dimensional action $a \in [-1, 1]^4$ controls three control
surfaces and a throttle; physical commands are obtained by an affine
remapping $u = \tfrac{1}{2}(u_{\max} - u_{\min})\,a + \tfrac{1}{2}(u_{\max} + u_{\min})$
with $u_{\min} = (-10, -10, -10, 0)$ and $u_{\max} = (15, 10, 10, 1)$.
We integrate the standard F16 nonlinear dynamics
$\dot x = F_{\text{F16}}(x, u)$ with RK4 at step size
$\Delta t = 0.05$. The task is to stabilize the aircraft into a
low-altitude target band $H \in [50, 150]$ for at least $50$
consecutive steps; the reward penalizes deviation from this band with
a piecewise-linear loss and adds a small bonus once the trajectory
enters it. The safety function aggregates six per-axis violations,
\begin{equation}
h(x) \;=\; \max\!\big(h_{\text{alt}},\;\; h_\alpha,\;\; h_\beta,\;\; h_\theta,\;\; h_{p_E},\;\; h_p\big),
\end{equation}
where each component normalizes the gap between the current state and
the corresponding bound. The active bounds are altitude
$H \in [0,\, 1000]$, angle of attack
$\alpha \in [-0.1745,\, 0.7854]$, sideslip $|\beta| \le 0.5236$,
pitch $|\theta| \le 0.95\,\pi/2$, lateral position $|p_E| \le 200$,
and roll rate $|p| \le 8$ (radians or meters as appropriate).

\subsection{Evaluation Protocol and Metrics}
\label{app:eval-protocol}

On the control tasks, every baseline is rerun in our setup and trained for its recommended number of iterations or for at least three hours, and every method is evaluated at its best checkpoint. SSM runs on one NVIDIA RTX 5090 GPU, and the baselines run on RTX 4090 and RTX 5090 GPUs.

The Quad2D tracking error sums the squared errors of the six state coordinates at each step, averages them over the steps of an episode, and then averages over episodes, including those that end in a crash. The Quad3D terminal tracking $\ell_1$ is
\begin{equation}
\label{eq:term-l1}
E_{\mathrm{term}} = \frac{1}{N} \sum_{n = 1}^{N} \big\| s_T^{(n)} - s_{\mathrm{ref}} \big\|_1,
\end{equation}
where $N$ is the number of evaluation episodes, including those that end in a crash. The final-50-step tracking $\ell_1$ averages the same distance over the last 50 steps of each episode. The Quad3D reward is a weighted $\ell_1$ penalty (Appendix~\ref{app:dynamics}), which keeps penalizing small residual errors near the target; the quadratic tracking reward applies only to Quad2D. The F16 stabilization rate is the fraction of episodes that keep the altitude in the band $H \in [50, 150]$ for at least 50 consecutive steps. Section~\ref{sec:experiments} defines the safety metrics~\eqref{eq:safety-metrics}.

\subsection{Quad3D Ablation Study}
\label{app:ablation-quad3d}

This appendix expands the Quad3D ablation summarized in
Section~\ref{sec:practical-algorithm}. We add (i) two structural
baselines beyond the candidate-set sweep --- an expectile-regressed
$\Vh^\tau$ baseline that replaces the sampled gate~\eqref{eq:vh-mc}
with a separate value network (Appendix~\ref{app:expectile}), and a
single-action gate ($K = 1$) that drops local proposals altogether ---
and (ii) the full $K \times \sigma_\eta$ grid. All variants share the
anchor configuration of the main-text Quad3D entry
(Section~\ref{sec:experiments}); each variant modifies only the
indicated knob.

\paragraph{Evaluation.}
We evaluate each variant on $500$ initial states sampled from the
Quad3D reset box with rejection on $h(s) < 0$ (non-goal and
physically safe; mean initial $h$: $-0.530$, $95$th percentile:
$-0.099$). We report the terminal tracking $\ell_1$~\eqref{eq:term-l1}, which
includes crashed episodes, as the task metric, since it is more
aligned with smooth stabilization than thresholded success. We
report \emph{episode violation fraction} (the fraction of episodes
with at least one valid step satisfying $h(s_t) > 0$) and
\emph{step violation rate} (the fraction of valid steps with
$h(s_t) > 0$) as safety metrics, alongside \emph{crash fraction}
(hard-termination rate).

\paragraph{Master ablation table.}
Table~\ref{tab:ablation-full} consolidates all 12 variants on the
500-state evaluation set. The shaded row marks the canonical SSM
configuration; boldface marks the column-wise best within the
sampled-$\Qh$ block.

\begin{table}[h]
\centering
\small
\caption{Full Quad3D ablation results ($n = 500$ evaluation episodes).
The top block reports estimator-level and candidate-source baselines;
the bottom block sweeps the local Gaussian-proposal grid
$K \in \{8, 16, 32\} \times \sigma_\eta \in \{0.1, 0.3, 0.8\}$.
Shaded row: the canonical SSM configuration used in the main results.
Boldface within the sampled-$\Qh$ block: the lowest terminal tracking $\ell_1$
(default $K = 8, \sigma_\eta = 0.3$) and the safety-clean cell
$(K, \sigma_\eta) = (32, 0.3)$, which is the only sampled-gate cell
that achieves zero observed violation across all three safety metrics
at the lowest accompanying $\ell_1$.}
\label{tab:ablation-full}
\begin{tabular}{lccrrrr}
\toprule
Variant & $K$ & $\sigma_\eta$
& Term.\ $\ell_1\!\downarrow$ & Crash\,$\downarrow$
& Ep.\ vio.\,$\downarrow$
& Step vio.\,$\downarrow$ \\
&&&&&& {\scriptsize ($\times 10^{-3}$)} \\
\midrule
\multicolumn{7}{l}{\textit{Structural ablations}} \\
Expectile-$\Vh^\tau$ baseline    & ---  & ---   & $3.596$          & $0.002$          & $0.004$          & $0.09$          \\
Policy-only candidates           & $8$  & ---   & $1.793$          & $0.006$          & $0.006$          & $0.65$          \\
Single-action gate ($K = 1$)     & $1$  & ---   & $0.922$          & $0.000$          & $0.000$          & $0.00$          \\
\midrule
\multicolumn{7}{l}{\textit{Sampled-$\Qh$ gate: candidate-count $\times$ proposal-scale grid}} \\
Sampled-$\Qh$ gate               & $8$  & $0.1$ & $0.650$          & $0.016$          & $0.016$          & $1.60$          \\
\rowcolor{anchorrow}
Sampled-$\Qh$ gate (default)     & $8$  & $0.3$ & $\mathbf{0.321}$ & $0.002$          & $0.022$          & $2.10$          \\
Sampled-$\Qh$ gate               & $8$  & $0.8$ & $1.233$          & $0.008$          & $0.008$          & $0.80$          \\
Sampled-$\Qh$ gate               & $16$ & $0.1$ & $1.131$          & $0.016$          & $0.028$          & $6.10$          \\
Sampled-$\Qh$ gate               & $16$ & $0.3$ & $0.883$          & $0.002$          & $0.006$          & $0.40$          \\
Sampled-$\Qh$ gate               & $16$ & $0.8$ & $1.184$          & $0.000$          & $0.000$          & $0.00$          \\
Sampled-$\Qh$ gate               & $32$ & $0.1$ & $1.343$          & $0.000$          & $0.008$          & $0.60$          \\
Sampled-$\Qh$ gate (safety-clean) & $32$ & $0.3$ & $0.454$          & $\mathbf{0.000}$ & $\mathbf{0.000}$ & $\mathbf{0.00}$ \\
Sampled-$\Qh$ gate               & $32$ & $0.8$ & $2.353$          & $0.002$          & $0.008$          & $0.30$          \\
\bottomrule
\end{tabular}
\end{table}

\paragraph{Mechanism notes beyond the main text.}

\emph{Why the expectile baseline degrades on Quad3D.}
The expectile-regressed $\Vh^\tau$ inflates terminal tracking $\ell_1$ by an
order of magnitude over the default sampled gate
(Table~\ref{tab:ablation-full}). The two estimators aggregate over
actions in different ways: the sampled gate resolves
$\min_a \Qh(s, \cdot)$ locally on each state via the candidate set,
whereas the expectile network learns a smoothed envelope over the
policy's action distribution. Quad3D's wide reset distribution and
disconnected feasible region reward the local resolution.

\emph{Single-action vs.\ policy-only at the same no-Gaussian setting.}
Among the no-Gaussian variants, the single-action gate ($K = 1$)
reaches the lowest terminal tracking $\ell_1$ with zero observed violation,
while the policy-only setting ($K = 8$, no Gaussian) is the worst
configuration on terminal tracking $\ell_1$ (Table~\ref{tab:ablation-full}).
Both drop local proposals; only $K$ differs. A plausible explanation
is the bias of $\min$ over multiple samples drawn from the same
policy distribution: as $\pi_\theta$ concentrates around its modes,
$\widehat\Vh = \min_k \Qh(s, a_\theta^{(k)})$ becomes an optimistic
lower estimate of the policy's worst-case violation and routes more
states to the reward branch, where $\Qr$ is poorly calibrated.
Single-action avoids this by reading $\Qh$ at the anchor directly.
The $(K, \sigma_\eta) = (32, 0.3)$ configuration attains a lower terminal
tracking $\ell_1$ while retaining zero observed violations and crashes.

\emph{The $K$ trade-off does not extend off the $\sigma_\eta = 0.3$
ridge.} Along $\sigma_\eta = 0.3$, raising $K$ from 8 to 32 removes the
observed episode violations at a higher terminal tracking $\ell_1$; off
this ridge, the relationship between $K$ and episode violation is
non-monotonic (Table~\ref{tab:ablation-full}). A plausible explanation
is that proposals at $\sigma_\eta = 0.1$ stay too close to the policy
mode, while proposals at $\sigma_\eta = 0.8$ reach actions where $\Qh$
is poorly calibrated.

\subsection{Safety-Gymnasium Experiments and Additional Studies}
\label{app:safety-gym}

\paragraph{Tasks and evaluation protocol.}
\texttt{SafetyHalfCheetahVelocity-v1} and \texttt{SafetySwimmerVelocity-v1}~\citep{ji2023safety} emit a unit cost at every step whose forward velocity $v_x$ exceeds a task threshold $v_{\mathrm{thr}}$, $3.2096$ for HalfCheetah and $0.2282$ for Swimmer. The episodic cost counts these steps in an episode of 1000 steps. CAL and ALGD train against the budget $\E_\pi[C(\tau)] \le d$ with $d = 25$, which also serves as the feasibility criterion for every method; SSM, RAC, and RESPO optimize reachability objectives without a budget. Each method is trained with five seeds and evaluated without checkpoint selection at a fixed step: 1M environment steps, or 10M for RESPO, its native on-policy horizon (333 epochs of 30k steps). Each evaluation runs 50 episodes on CPU with shared reset seeds and one stochastic action sample per decision. RESPO keeps its own simulator versions and policy random-number stream, so its initial states need not match those of the other methods. Tables report the mean $\pm$ SD over seeds of per-seed means. These five-seed results, with the configurations specified here, supersede the preliminary velocity results reported during review.

\paragraph{SSM configuration.}
On the velocity tasks, SSM keeps the HJ critic, the finite-candidate gate of Section~\ref{sec:practical-algorithm}, and the reward/recovery structure of the target~\eqref{eq:pi-gibbs}. It trains the denoiser with a posterior-noise regression target instead of the gradient target~\eqref{eq:score_target}. We noise fresh samples from a lagged copy of the denoiser in pre-tanh coordinates ($a = \tanh u$) to obtain $u_t$, then estimate the posterior-mean noise $\E[\epsilon \mid s, u_t]$ under~\eqref{eq:pi-gibbs} by self-normalized importance sampling over samples from a mixture proposal. The training label averages this estimate 1:1 with the lagged copy's prediction. In place of a fixed $\alpha_r$, we choose the reward coefficient so that the KL divergence between the reward-tilted and untilted sample weights matches a target value on average over the gated-feasible states. We reduce it separately on states whose KL would exceed a per-state cap. On HalfCheetah, the critic target~\eqref{eq:hj_bellman} replaces every positive $h$ by the constant $h_{\mathrm{gap}} = 0.5$, which keeps the sign of $h$ and the boundary $h = 0$; Swimmer uses the raw $h$. At test time, SSM acts with one diffusion sample without filtering by $\Qh$. Table~\ref{tab:velocity-hparams} lists the settings.

\begin{table}[h]
\centering
\small
\caption{SSM settings on the velocity tasks. Settings not listed follow Algorithm~\ref{alg:ssm-full}.}
\label{tab:velocity-hparams}
\begin{tabular}{@{}p{0.43\linewidth}p{0.53\linewidth}@{}}
\toprule
Setting & Value \\
\midrule
Actor; reward critics (twin); safety critic (single) & $3 \times 256$; $2 \times 256$; $2 \times 256$ \\
Diffusion steps $T$; schedule; DDPM temperature & $5$; VP; $0.2$, when acting and inside the update \\
Optimizer & Adam; actor $10^{-4}$ with global-norm clipping at $1.0$; critics $3 \times 10^{-4}$; constant rates \\
Batch size & critics $256$; actor $64$ \\
Discounts $\gamma_r$, $\gamma_h$; target EMA rate & $0.99$, $0.99$; $0.005$ \\
Gate candidates $K$; proposal scale $\sigma_\eta$ & $8$; $0.3$ \\
Recovery coefficient $\beta$ & $3$ \\
Importance samples & $64$: $32$ from the forward-noise likelihood and $32$ from a bridge around $8$ samples of the lagged denoiser \\
Reward-tilt KL: batch mean; per-state cap & $0.6$; $1.2$ \\
Lagged copy: refresh interval; averaging weight & $500$ actor updates; $0.5$ \\
$h_{\mathrm{gap}}$ & $0.5$ (HalfCheetah); $0$ (Swimmer) \\
Reward-critic loss & Huber ($\delta = 10$), targets clipped to $[-200, 1000]$ \\
Warm-up & random actions for $3 \times 10^4$ steps; critic updates from $2 \times 10^4$ steps, actor updates from $3 \times 10^4$ \\
Observation normalization & statistics of the first $10^4$ steps, then frozen \\
Updates per step; training steps & $1$; $10^6$ \\
\bottomrule
\end{tabular}
\end{table}

\paragraph{Baseline configurations.}
CAL uses an update-to-data ratio of 10, an ensemble of eight cost critics, and layer-normalized critics. ALGD uses an update-to-data ratio of 2 with the score-matching term of its release (weight 0.1). Both train with $d = 25$. Our HalfCheetah task differs from the HalfCheetah task in the ALGD paper~\citep{algd2026}, so the results are not directly comparable with those reported there. RESPO runs the official implementation and objective, with constraint threshold 0. RAC runs a JAX port of the official learner whose reference hyperparameters target 3M-step training, so we tuned it at the 1M budget and trained the selected configuration on the reported seeds. The selection followed a pre-specified rule and used only tuning seeds and a separate evaluation panel, although the candidate configurations were partly informed by earlier results on the reporting panel. On two of the five reported HalfCheetah seeds, the learned multiplier of RAC falls to about zero, and the policy then violates the constraint at almost every step. These two seeds account for the large mean and spread of RAC in Table~\ref{tab:safety-gym}.

\paragraph{Results throughout training.}
Table~\ref{tab:safety-gym-full} reports each method at 25, 50, 75, and 100\% of its horizon, and Figure~\ref{fig:velocity-curves} shows the learning curves. The cost of SSM peaks within the first 100k steps and remains low afterwards on both tasks. The reward of RESPO rises over its 10M-step horizon; read at the step counts of the other methods, it is far lower than theirs on both tasks.

\begin{table}[h]
\centering
\caption{Safety-Gymnasium results at 25, 50, 75, and 100\% of each method's horizon (1M steps; 10M for RESPO): episodic reward and cost, mean $\pm$ SD over five seeds. Asterisks mark values interpolated per seed between the two neighboring evaluated checkpoints. The row RESPO (0.25--1M) reads the RESPO runs at the step counts of the other methods.}
\label{tab:safety-gym-full}
\footnotesize
\renewcommand{\arraystretch}{0.92}
\setlength{\tabcolsep}{5pt}
\begin{tabular}{@{}llcccc@{}}
\toprule
Method & Metric & 25\% & 50\% & 75\% & 100\% \\
\midrule
\multicolumn{6}{@{}l}{\textit{HalfCheetah velocity}} \\
\multirow{2}{*}{SSM} & Reward & $2675\pm25$ & $2724\pm27$ & $2744\pm25$ & $2754\pm13$ \\
 & Cost & $0.0\pm0.0$ & $0.1\pm0.1$ & $0.0\pm0.0$ & $0.0\pm0.1$ \\
\multirow{2}{*}{RAC} & Reward & $1949\pm2579$ & $3356\pm2977$ & $3720\pm3456$ & $5014\pm3781$ \\
 & Cost & $195.6\pm437.3$ & $196.0\pm438.3$ & $196.5\pm438.5$ & $391.7\pm536.3$ \\
\multirow{2}{*}{CAL} & Reward & $2451\pm175$ & $2530\pm136$ & $2534\pm193$ & $2631\pm107$ \\
 & Cost & $15.8\pm4.2$ & $23.1\pm6.5$ & $22.2\pm11.7$ & $28.3\pm12.5$ \\
\multirow{2}{*}{ALGD} & Reward & $2533\pm134$ & $2629\pm113$ & $2677\pm103$ & $2695\pm99$ \\
 & Cost & $23.9\pm15.9$ & $16.1\pm8.3$ & $19.3\pm11.9$ & $42.0\pm39.7$ \\
\multirow{2}{*}{RESPO} & Reward & $1411\pm311^{*}$ & $1992\pm360^{*}$ & $2201\pm324^{*}$ & $2323\pm323$ \\
 & Cost & $6.9\pm3.7^{*}$ & $9.7\pm5.1^{*}$ & $11.4\pm5.4^{*}$ & $8.4\pm3.9$ \\
\multirow{2}{*}{RESPO (0.25--1M)} & Reward & $-178\pm26^{*}$ & $-2\pm57^{*}$ & $228\pm111^{*}$ & $447\pm142^{*}$ \\
 & Cost & $0.0\pm0.0^{*}$ & $0.1\pm0.1^{*}$ & $0.6\pm0.7^{*}$ & $1.6\pm1.3^{*}$ \\
\midrule
\multicolumn{6}{@{}l}{\textit{Swimmer velocity}} \\
\multirow{2}{*}{SSM} & Reward & $35\pm10$ & $42\pm8$ & $45\pm5$ & $45\pm6$ \\
 & Cost & $4.6\pm5.8$ & $1.2\pm1.0$ & $1.4\pm1.9$ & $0.5\pm0.4$ \\
\multirow{2}{*}{RAC} & Reward & $48\pm45$ & $68\pm57$ & $41\pm57$ & $75\pm57$ \\
 & Cost & $1.2\pm2.0$ & $6.2\pm6.4$ & $13.7\pm18.6$ & $14.5\pm28.4$ \\
\multirow{2}{*}{CAL} & Reward & $30\pm7$ & $32\pm6$ & $27\pm10$ & $31\pm7$ \\
 & Cost & $47.3\pm51.9$ & $14.8\pm13.1$ & $17.8\pm5.0$ & $27.9\pm21.3$ \\
\multirow{2}{*}{ALGD} & Reward & $42\pm2$ & $47\pm3$ & $48\pm5$ & $49\pm4$ \\
 & Cost & $3.6\pm1.5$ & $5.9\pm2.7$ & $5.9\pm3.5$ & $6.5\pm2.4$ \\
\multirow{2}{*}{RESPO} & Reward & $31\pm1^{*}$ & $34\pm1^{*}$ & $36\pm1^{*}$ & $37\pm1$ \\
 & Cost & $7.8\pm2.1^{*}$ & $7.3\pm1.4^{*}$ & $9.2\pm2.0^{*}$ & $8.4\pm1.3$ \\
\multirow{2}{*}{RESPO (0.25--1M)} & Reward & $-17\pm1^{*}$ & $-9\pm4^{*}$ & $2\pm7^{*}$ & $13\pm12^{*}$ \\
 & Cost & $4.2\pm2.6^{*}$ & $9.3\pm9.7^{*}$ & $8.1\pm5.2^{*}$ & $12.2\pm4.6^{*}$ \\
\bottomrule
\end{tabular}
\end{table}

\begin{figure}[h]
\centering
\includegraphics[width=\linewidth]{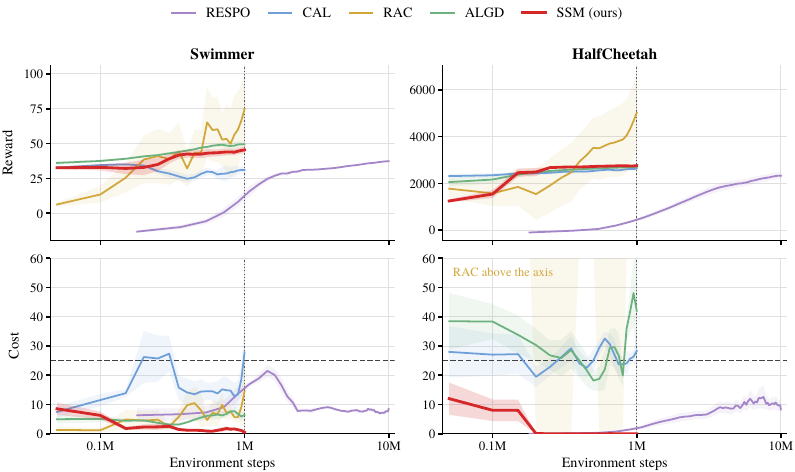}
\caption{Learning curves on the velocity tasks (mean $\pm 1$ standard error over five seeds; each seed smoothed with a centered moving average over 10\% of its horizon): test reward (top) and cost (bottom) against environment steps on a log scale. The dashed line marks the budget $d = 25$ and the dotted line the 1M endpoint. Cost axes are truncated; the mean cost of RAC on HalfCheetah stays between about 195 and 392 and lies above the axis. Tables use the unsmoothed evaluations.}
\label{fig:velocity-curves}
\end{figure}

\paragraph{Sensitivity to $\alpha_r$, $\beta$, and $M_q$.}
On the fixed-layout \texttt{SafetyCarGoal1-v0} task, we vary one coefficient at a time from $0.25\times$ to $4\times$ its default ($\alpha_r = 1$, $\beta = 5$, $M_q = 20$). Table~\ref{tab:sensitivity} reports the time average of each evaluation curve over the final 20\% of environment steps; $P(C_{\mathrm{ep}} = 0)$ is the fraction of evaluation episodes without violation in the same window. Across the full $16\times$ range of each coefficient, the default attains the highest reward and goal count, and every setting keeps a low cost and a high fraction of violation-free episodes. For given critics, the gates depend on $\widehat\Vh$ and $\Qh$ rather than on these coefficients: $\alpha_r$ scales the reward guidance in predicted-feasible states, $\beta$ the recovery guidance in predicted-infeasible states, and $M_q$ the whole denoising target. Changing them therefore rescales the guidance without moving the gates, consistent with the small variation in Table~\ref{tab:sensitivity}.

\begin{table}[h]
\centering
\small
\caption{Sensitivity of SSM to $\alpha_r$, $\beta$, and $M_q$ on \texttt{SafetyCarGoal1-v0}. The first row is the default; each other row changes one coefficient.}
\label{tab:sensitivity}
\begin{tabular}{@{}rrrrrrr@{}}
\toprule
$\alpha_r$ & $\beta$ & $M_q$ & Reward $\uparrow$ & Cost $\downarrow$ & Goals $\uparrow$ & $P(C_{\mathrm{ep}} = 0)$ $\uparrow$ \\
\midrule
1    & 5    & 20 & 36.10 & 1.00 & 18.65 & 93.5\% \\
0.25 & 5    & 20 & 35.39 & 0.52 & 18.19 & 96.7\% \\
4    & 5    & 20 & 36.03 & 1.05 & 18.57 & 93.3\% \\
1    & 1.25 & 20 & 35.99 & 0.83 & 18.59 & 95.2\% \\
1    & 20   & 20 & 35.99 & 0.97 & 18.39 & 92.7\% \\
1    & 5    & 5  & 35.55 & 0.61 & 18.33 & 94.3\% \\
1    & 5    & 80 & 35.36 & 0.81 & 18.30 & 93.2\% \\
\bottomrule
\end{tabular}
\end{table}

\paragraph{Policy class under an expected-cost formulation.}
To separate the effect of the policy class from that of HJ routing, we compare QSM-Lag and SAC-Lag on the same task (Table~\ref{tab:policy-class}). Both use a Lagrangian cost term; QSM-Lag trains a diffusion actor by Q-score matching~\citep{psenka2023learning}, and SAC-Lag trains a Gaussian actor~\citep{haarnoja2018sac, ray2019benchmarking}. At both budgets, QSM-Lag attains higher reward and goal count with substantially lower cost than SAC-Lag, and only QSM-Lag stays within the budget $d = 5$. Constrained navigation can produce several separated high-value action regions, which a Gaussian actor must cover with a single mode, whereas a diffusion actor can place mass on each~\citep{psenka2023learning, ding2024diffusion}.

\begin{table}[h]
\centering
\small
\caption{Diffusion versus Gaussian actors under the same expected-cost formulation on \texttt{SafetyCarGoal1-v0}, with the summary of Table~\ref{tab:sensitivity}.}
\label{tab:policy-class}
\begin{tabular}{@{}lrrrr@{}}
\toprule
Method & Budget $d$ & Reward $\uparrow$ & Cost $\downarrow$ & Goals $\uparrow$ \\
\midrule
QSM-Lag    & 5   & 36.25 & 1.58 & 18.73 \\
SAC-Lag    & 5   & 30.89 & 8.25 & 15.83 \\
QSM-Lag    & 0   & 36.34 & 1.27 & 18.90 \\
SAC-Lag    & 0   & 29.01 & 5.67 & 14.67 \\
SSM (ours) & --- & 36.10 & 1.00 & 18.65 \\
\bottomrule
\end{tabular}
\end{table}

\paragraph{Baseline properties.}
Table~\ref{tab:taxonomy} classifies the compared methods by whether they use a state-wise safety formulation, their policy class, and whether they update a dual variable. Among them, only SSM combines state-wise safety with a diffusion policy and no dual update.

\begin{table}[h]
\centering
\small
\caption{Properties of the compared methods.}
\label{tab:taxonomy}
\begin{tabular}{@{}lccc@{}}
\toprule
Method & State-wise safety & Policy class & Dual update \\
\midrule
CAL~\citep{wu2024offpolicy}           & No  & Gaussian  & Yes \\
RAC (RCRL)~\citep{yu2022reachability} & Yes & Gaussian  & Yes \\
RESPO~\citep{ganai2023iterative}      & Yes & Gaussian  & Yes \\
EFPPO~\citep{so2023solving}           & Yes & Gaussian  & No  \\
ALGD~\citep{algd2026}                 & No  & Diffusion & Yes \\
SSM (ours)                            & Yes & Diffusion & No  \\
\bottomrule
\end{tabular}
\end{table}

\paragraph{Computation and sample efficiency.}
On Quad3D, SSM reaches its reported performance within its training budget of $10^6$ environment steps, whereas RAC and RESPO require $4 \times 10^6$ and $6 \times 10^6$ steps to converge. CAL is the most sample-efficient method and converges within $5 \times 10^5$ steps. SSM trains in about one hour on Quad3D, and RESPO, the fastest baseline in wall-clock time, takes about 2.5 hours; the relative order is the same on the other tasks. The actor update differentiates only through the denoiser at one noise level rather than through the reverse chain. At deployment, SSM runs one reverse chain of $T = 5$ steps per action. The $K = 8$ gate candidates are used only during training, where they perturb a single policy sample and are evaluated as one critic batch.

\end{document}